\newtheorem{theorem}{Theorem}
\newtheorem{proposition}{Proposition}
\theoremstyle{definition}
\newtheorem*{assumption*}{\assumptionnumber}
\providecommand{\assumptionnumber}{}
\newcommand*\rel@kern[1]{\kern#1\dimexpr\macc@kerna}
\newcommand*\widebar[1]{%
  \begingroup
  \def\mathaccent##1##2{%
    \rel@kern{0.8}%
    \overline{\rel@kern{-0.8}\macc@nucleus\rel@kern{0.2}}%
    \rel@kern{-0.2}%
  }%
  \macc@depth\@ne
  \let\math@bgroup\@empty \let\math@egroup\macc@set@skewchar
  \mathsurround\z@ \frozen@everymath{\mathgroup\macc@group\relax}%
  \macc@set@skewchar\relax
  \let\mathaccentV\macc@nested@a
  \macc@nested@a\relax111{#1}%
  \endgroup
}
\newcommand{\minimize}{\mathop{\mathrm{minimize}}}
\def\R{\mathbb{R}}
\def\N{\mathbb{N}}
\def\E{\mathbb{E}}
\def\sign{\mathrm{sign}}
\def\cC{\mathcal{C}}
\def\cX{\mathcal{X}}
\def\cY{\mathcal{Y}}
\def\err{\mathrm{err}}
\def\ind#1{\mathbbm{1}\left\{#1\right\}}
\newcommand{\quantile}[2]{\mathrm{Quantile}_{#1}\left(#2\right)}
\title{Conformal PID Control for Time Series Prediction}  
\author{Anastasios N. Angelopoulos$^{1}$, Emmanuel J. Cand\`es$^{2}$, Ryan
  J. Tibshirani$^{1}$} 
\date{$^1$University of California, Berkeley, $^2$Stanford University \\
\texttt{\{angelopoulos,ryantibs\}@berkeley.edu}, \texttt{candes@stanford.edu}} 
\begin{document}
\maketitle

\begin{abstract}
We study the problem of uncertainty quantification for time series prediction,
with the goal of providing easy-to-use  algorithms with formal
guarantees. The algorithms we present build upon ideas from conformal prediction
and control theory, are able to prospectively model conformal scores in an
online setting, and adapt to the presence of systematic errors due to
seasonality, trends, and general distribution shifts. Our theory both simplifies
and strengthens existing analyses in online conformal prediction.  
Experiments on 4-week-ahead forecasting of statewide COVID-19 death counts in
the U.S.\ show an improvement in coverage over the ensemble forecaster used in
official CDC communications. We also run experiments on predicting electricity
demand, market returns, and temperature using autoregressive, Theta, Prophet,
and Transformer models. We provide an extendable codebase for testing our
methods and for the integration of new algorithms, data sets, and forecasting 
rules.\footnote{\url{http://github.com/aangelopoulos/conformal-time-series}}    
\end{abstract}

\section{Introduction}
Machine learning models run in production systems regularly encounter data
distributions that change over time. This can be due to factors such as
seasonality and time-of-day, continual updating and re-training of upstream
machine learning models, changing user behaviors, and so on. These distribution
shifts can degrade a model's predictive performance. They also invalidate
standard techniques for uncertainty quantification, such as \emph{conformal
  prediction} \cite{vovk1999machine, vovk2005algorithmic}.  

To address the problem of shifting distributions, we can consider the task of prediction in an adversarial online setting, as in \cite{gibbs2021adaptive}. 
In this setting, we observe a (potentially)
adversarial time series of deterministic covariates $x_t \in \cX$ and responses
$y_t \in \cY$, for $t \in \N = \{1,2,3,\dots\}$. As in standard conformal
prediction, we are free to define any \emph{conformal score function} $s_t : \cX
\times \cY \to \R$, which we can view as measuring the accuracy of our
forecast at time $t$. We will assume with a loss of generality that $s_t$ is 
negatively oriented (lower values mean greater forecast accuracy). For example,
we may use the absolute error $s_t(x, y) = |y - f_t(x)|$, where $f_t$ is a
forecaster trained on data up to but not including data at time $t$.

The challenge in the sequential setting is as follows. We seek to invert the 
score function to construct a \emph{conformal prediction set},  
\begin{equation}
\label{eq:conformal-set}
\cC_t = \{y \in \cY: s_t(x_t, y) \leq q_t\},
\end{equation}
where $q_t$ is an estimated $1-\alpha$ quantile for the score $s_t(x_t,y_t)$
at time $t$. In standard conformal prediction, we would take $q_t$ to be a level
$1-\alpha$ sample quantile (up to a finite-sample correction) of $s_t(x_i,y_i)$,
$i < t$; if the data sequence $(x_i, y_i)$, $i \in \N$ were i.i.d.\ or
exchangeable, then this would yield $1-\alpha$ coverage
\cite{vovk2005algorithmic} at each time $t$. However, in the sequential setting,
which does not assume exchangeability (or any probabilistic model for the data 
for that matter), choosing $q_t$ in \eqref{eq:conformal-set} to yield coverage
is a formidable task. In fact, if we are not willing to make any assumptions
about the data sequence, then achieving coverage at time $t$ would require the 
user to construct prediction intervals of infinite sizes.

Therefore, our goal is to achieve \emph{long-run coverage} in time. That is,
letting $\err_t = \ind{y_t \notin \cC_t}$, we would like to achieve, for large
integers $T$,
\begin{equation}
\label{eq:long-run-coverage}
\frac{1}{T} \sum_{t=1}^T \err_t = \alpha + o(1)
\end{equation}
under few or no assumptions, where $o(1)$ denotes a quantity that tends to zero
as $T \to \infty$. We note that \eqref{eq:long-run-coverage} is not probabilistic
at all, and every theoretical statement we will make in this paper holds
deterministically. Furthermore, going beyond \eqref{eq:long-run-coverage}, we
also seek to design flexible strategies to produce the sharpest prediction sets 
possible, which not only adapt to, but also anticipate distribution shifts.   

We call our proposed solution \emph{conformal PID control}. It treats the system for producing prediction sets as a proportional-integral-derivative (PID)
controller. In the language of control, the prediction sets take a \emph{process
  variable}, $q_t$, and then produce an output, $\err_t$. We seek to anchor
$\err_t$ to a \emph{set point}, $\alpha$. To do so, we apply corrections to
$q_t$ based on the error of the output, $g_t = \err_t - \alpha$. By reframing
the problem in this language, we are able to build algorithms that have more
stable coverage while also prospectively adapting to changes in the score
sequence, much in the same style as a control system. See the diagram in
Figure \ref{fig:control-simplified}. 

\begin{figure}[h]
\centering
\includegraphics[width=0.75\textwidth]{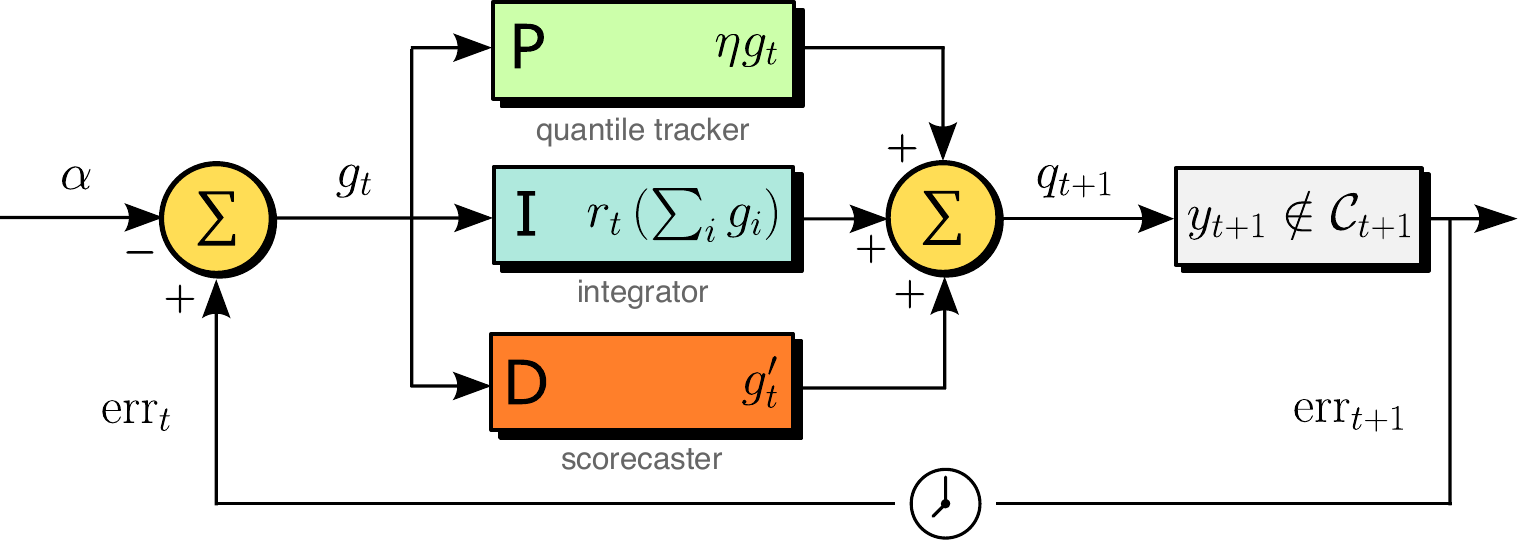}
\caption{Conformal PID Control, expressed as a block diagram.} 
\label{fig:control-simplified}
\end{figure}

\subsection{Peek at results: methods}

Three design principles underlie our methods: 

\begin{enumerate}
\item \emph{Quantile tracking (P control).} Running online gradient descent
  on the quantile loss (summed over all past scores) gives rise to a method that
  we call \emph{quantile tracking}, which achieves long-run coverage
  \eqref{eq:long-run-coverage} under no assumptions except boundedness of the
  scores. This bound can be unknown. Unlike adaptive conformal inference (ACI) 
  \cite{gibbs2021adaptive}, quantile tracking does not return infinite sets after a sequence of
  miscoverage events. This can be seen as equivalent to proportional (P)
  control.  

\item \emph{Error integration (I control).} By incorporating the running sum
  \smash{$\sum_{i=1}^t (\err_i - \alpha)$} of the coverage errors into the
  online quantile updates, we can further stabilize the coverage. This
  \emph{error integration} scheme achieves long-run coverage
  \eqref{eq:long-run-coverage} under no assumptions whatsoever on the  
  scores (they can be unbounded). This can be seen as equivalent to integral 
  (I) control. 

  \item \emph{Scorecasting (D control).} To account for systematic trends in the
    scores---this may be due to aspects of the data distribution, fixed or
    changing, which are not captured by the initial forecaster---we train a
    second model, namely, a \emph{scorecaster}, to predict the quantile of the 
    next score. While quantile tracking and error integration are merely
    reactive, scorecasting is forward-looking. It can potentially residualize 
    out systematic trends in the errors and lead to practical advantages 
    in terms of coverage and efficiency (set sizes). This can be seen as
    equivalent to derivative (D) control. Traditional control theory would
    suggest using a linear approximation $g'_t = g_t - g_{t-1}$, but in our
    problem, we will typically choose more advanced scorecasting algorithms
    that go well beyond simple difference schemes. 
\end{enumerate}

These three modules combine to make our final iteration, the \emph{conformal PID
  controller}: 
\begin{equation}
\label{eq:pid-update}
q_{t+1} = \underbrace{\vphantom{\Bigg( \sum_{i=1}^t} \eta g_t}_{\text{P}} +\, 
\underbrace{r_t\Bigg( \sum_{i=1}^t g_t \Bigg)}_{\text{I}} \,+
\underbrace{\vphantom{\Bigg( \sum_{i=1}^t} g'_t}_{\text{D}}. 
\end{equation}
In traditional PID control, one would take $r_t(x)$ to be a linear function of
$x$. Here, we allow for nonlinearity and take $r_t$ to be a \emph{saturation  
function} obeying   
\begin{equation}
\label{eq:saturation}
x \geq c \cdot h(t) \implies r_t(x) \geq b, \;\;\; \text{and} \;\;\;
x \leq -c \cdot h(t) \implies r_t(x) \leq -b,
\end{equation}
for constants $b,c>0$, and a sublinear, nonnegative, nondecreasing function
$h$---we call a function $h$ satisfying these conditions \emph{admissible}. An
example is the \emph{tangent integrator} \smash{$r_t(x) = K_{\text{I}} \tan(x
  \log(t) /(t C_{\text{sat}}))$}, where we set $\tan(x) = \sign(x) \cdot
\infty$ for $x \notin [-\pi/2, \pi/2]$, and \smash{$C_{\text{sat}},
  K_{\text{I}}>0$} are constants. The choice of integrator $r_t$ is a design
decision for the user, as is the choice of scorecaster $g'_t$.    

We find it convenient to reparametrize \eqref{eq:pid-update}, to produce a
sequence of quantile estimates $q_t$, $t \in \N$ used in the prediction sets
\eqref{eq:conformal-set}, as follows: 
\begin{equation}
\begin{aligned}
\label{eq:main}
\text{let $\hat{q}_{t+1}$ be any function of the past: $x_i,y_i,q_i$, for $i
  \leq t$}, \\  
\text{then update } q_{t+1} = \hat{q}_{t+1} + r_t \Bigg( \sum_{i=1}^t (\err_i -   
\alpha) \Bigg).  
\end{aligned}
\end{equation}
Taking \smash{$\hat{q}_{t+1} = \eta g_t + g'_t$} recovers \eqref{eq:pid-update},  
but we find it generally useful to instead consider the formulation in
\eqref{eq:main}, which will be our main focus in the
exposition henceforth. Now we view \smash{$\hat{q}_{t+1}$} as the scorecaster,
which directly predicts $q_{t+1}$ using past data. A main result of this paper, 
whose proof is given in Section \ref{sec:methods}, is that the conformal PID 
controller \eqref{eq:main} yields long-run coverage for any
choice of integrator $r_t$ that satisfies the appropriate saturation condition,
and any scorecaster \smash{$\hat{q}_{t+1}$}.       

\begin{theorem}
\label{thm:main}
Let \smash{$\{\hat{q}_t\}_{t \in \N}$} be any sequence of numbers in $[-b/2, 
b/2]$ and let \smash{$\{s_t\}_{t \in \N}$} be any sequence of score functions
with outputs in $[-b/2, b/2]$. Here $b > 0$, and may be infinite. Assume that
$r_t$ satisfies \eqref{eq:saturation}, for an admissible function $h$. Then the  
iterations in \eqref{eq:main} achieve long-run coverage
\eqref{eq:long-run-coverage}.   
\end{theorem}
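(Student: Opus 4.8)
The plan is to control the running sum of coverage errors and show it is sublinear in $T$. Put $g_t = \err_t - \alpha$ and $S_t = \sum_{i=1}^t g_i$, so that $S_t - S_{t-1} = g_t \in [-\alpha,\, 1-\alpha]$ and in particular $|g_t| \le 1$. Since $\frac1T \sum_{t=1}^T \err_t = \alpha + S_T/T$, the coverage guarantee \eqref{eq:long-run-coverage} is equivalent to $S_T = o(T)$, and for that it suffices to prove the deterministic bound
\[
|S_t| \le c\,h(t) + 1 \quad\text{for every } t \in \N ,
\]
since dividing by $t$ and using that $h$ is sublinear then gives $S_T/T \to 0$.

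First I would isolate a \emph{restoring-force} consequence of the saturation condition \eqref{eq:saturation}. Suppose $S_t \ge c\,h(t)$. Then $r_t(S_t) \ge b$, and since $\hat{q}_{t+1} \ge -b/2$, the update \eqref{eq:main} yields $q_{t+1} \ge -b/2 + b = b/2 \ge s_{t+1}(x_{t+1}, y_{t+1})$; hence $y_{t+1} \in \cC_{t+1}$, so $\err_{t+1} = 0$ and $S_{t+1} = S_t - \alpha \le S_t$. Symmetrically, if $S_t \le -c\,h(t)$ then $q_{t+1} \le b/2 - b = -b/2 \le s_{t+1}(x_{t+1},y_{t+1})$, which forces $\err_{t+1} = 1$ and $S_{t+1} = S_t + (1-\alpha) \ge S_t$. (When $b = \infty$ this is immediate from finiteness of the scores; for finite $b$ the lone degenerate case $q_{t+1} = s_{t+1} = -b/2$ must be excluded, e.g.\ by taking the score range half-open or by using a strictly diverging integrator such as the tangent integrator.) In words: as soon as $S_t$ exits the band $[-c\,h(t),\, c\,h(t)]$, the next step pushes it back toward that band.

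With this in hand the displayed bound follows by induction on $t$. For $t=1$, $|S_1| \le 1 \le c\,h(1)+1$. For the step, assume $|S_t| \le c\,h(t)+1$. If $|S_t| \le c\,h(t)$, then $|S_{t+1}| \le |S_t| + 1 \le c\,h(t)+1 \le c\,h(t+1)+1$ because $h$ is nondecreasing. If $S_t > c\,h(t)$, then $S_{t+1} = S_t - \alpha$, so on one hand $S_{t+1} \le (c\,h(t)+1) - \alpha \le c\,h(t+1)+1$ by the inductive hypothesis, and on the other hand $S_{t+1} > c\,h(t) - \alpha \ge -1 \ge -(c\,h(t+1)+1)$ since $h \ge 0$ and $\alpha \le 1$; the case $S_t < -c\,h(t)$ is symmetric. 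This proves $|S_t| \le c\,h(t)+1$ for all $t$, and hence $\frac1T\sum_{t=1}^T \err_t = \alpha + S_T/T = \alpha + o(1)$.

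I do not expect a serious obstacle: the saturation condition was engineered exactly to furnish this barrier, and the remainder is bookkeeping. The one place to be careful is matching the additive constants through the induction so that the drift of the band $c\,h(t)$ between consecutive times is absorbed --- this is precisely where $h$ nondecreasing and $|g_t| \le 1$ enter --- together with the mild degenerate-boundary point flagged above in the finite-$b$ case.
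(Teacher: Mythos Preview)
Your proof is correct and follows essentially the same approach as the paper: an induction showing $|S_t| \le c\,h(t) + 1$, using the saturation condition \eqref{eq:saturation} to force a coverage (resp.\ miscoverage) event whenever $S_t$ exits the band $[-c\,h(t),\,c\,h(t)]$. The paper organizes this slightly differently---it first reparametrizes via $q'_t = q_t - \hat{q}_t$ and $s'_t = s_t - \hat{q}_t$ (so $s'_t \in [-b,b]$) to reduce to the pure-integrator case and then invokes Proposition~\ref{prop:integrator}, whereas you inline that induction directly with the $b/2$ bounds; you are also more explicit about the degenerate boundary point in the finite-$b$ case than the paper is.
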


To emphasize, this result holds deterministically, with no probabilistic model 
on the data $(x_t, y_t)$, $t \in \N$. (Thus in the case that the sequence is
random, the result holds for all realizations of the random variables.) As we
will soon see, this theorem can be seen as a generalization of existing results
in the online conformal literature. 

\subsection{Peek at results: experiments}

\paragraph{COVID-19 death forecasting.}

To demonstrate conformal PID in practice, we consider 4-week-ahead forecasting
of COVID-19 deaths in California, from late 2020 through late 2022. The base
forecaster $f_t$ that we use is the ensemble model from the COVID-19 Forecast
Hub, which is the model used for official CDC communications on COVID-19
forecasting \cite{cramer2022united, ray2023comparing}. 
In this
forecasting problem, at each time $t$ we actually seek to predict the observed
death count $y_{t+4}$ at time $t+4$.  

\begin{figure}[ht]
\includegraphics[width=\textwidth]{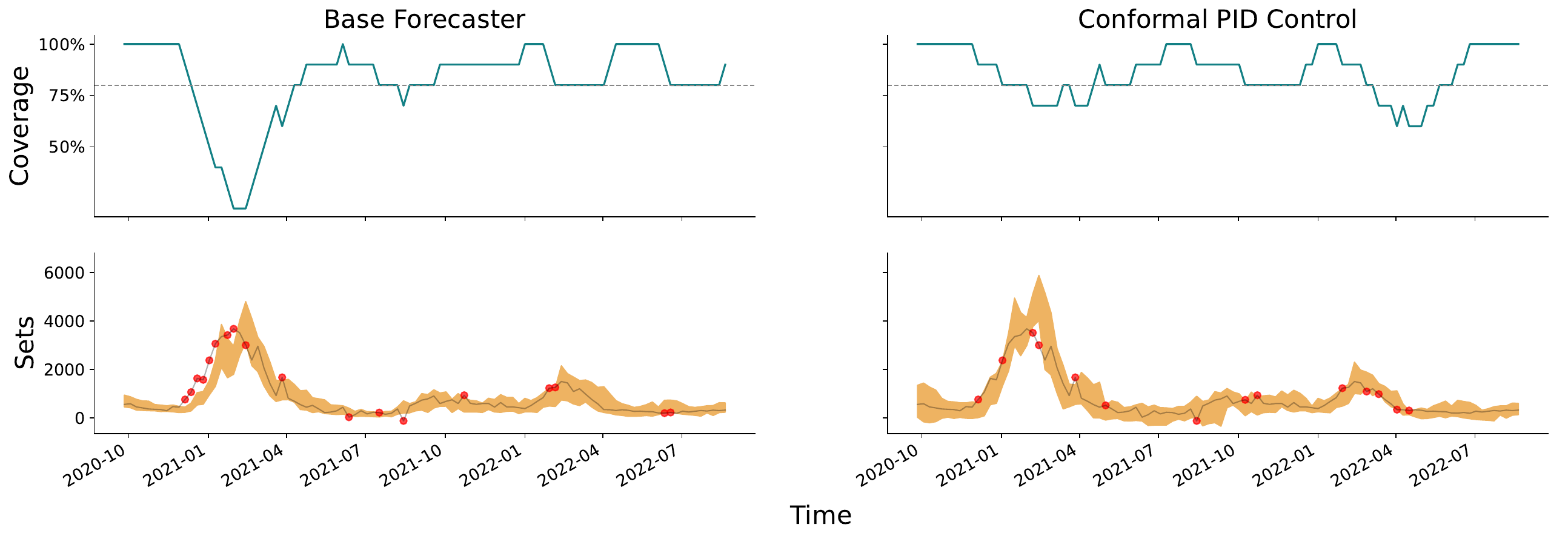}
\caption{Results for 4-week ahead COVID-19 death forecasting in California. The
  left column shows the COVID-19 Forecast Hub ensemble model, and the right
  column shows conformal PID control using the tan integrator, and a scorecaster 
  given by $\ell_1$-penalized quantile regression on all past forecasts, cases,
  and deaths from all 50 states. The top row plots coverage, averaged over a
  trailing window of 10 weeks. The nominal coverage level is $1-\alpha = 0.8$
  and marked by a gray dotted line. The bottom row plots the prediction sets in
  gold along with the ground-truth times series (death counts). Miscoverage
  events are indicated by red dots. Summary statistics such as the coverage and
  average set size are available in Table \ref{fig:table-teaser-1}.}
\label{fig:teaser-1}
\end{figure}

Figure \ref{fig:teaser-1} shows the central 80\% prediction sets from the
Forecast Hub ensemble model on the left panel, and those from our conformal PID
method on the right. We use a quantile conformal score function, as in
conformalized quantile regression \cite{romano2019conformalized}, applied
asymmetrically (i.e., separately) to the lower and upper quantile levels). We use 
the tan integrator, with constants chosen heuristically (as described in
Appendix \ref{app:heuristics}), and an $\ell_1$-regularized quantile regression 
as the scorecaster---in particular, the scorecasting model at time $t$ predicts 
the quantile of the score at time $t+4$ based on all previous forecasts, cases,
and deaths, from \emph{all 50 US states}. The main takeaway is that conformal
PID control is able to correct for consistent underprediction of deaths in the
winter wave of late 2020/early 2021. We can see from the figure that the
original ensemble fails to cover 8 times in a stretch of 10 weeks, resulting in
a coverage of 20\%; meanwhile, conformal PID only fails to cover 3 times during 
this stretch, restoring the coverage to 70\% (recall the nominal level is 80\%).  

How is this possible? The ensemble is mainly comprised of constituent forecasters
that ignore geographic dependencies between states \cite{cramer2022evaluation}
for the sake of simplicity or computational tractability. But COVID infections
and deaths exhibit strong spatiotemporal dependencies, and most US states
experienced the winter wave of late 2020/early 2021 at similar points in
time. The scorecaster is thus able to learn from the mistakes made on other
US states in order to prospectively adjust the ensemble's forecasts for the
state of California. Similar improvements can be seen for other states, and we
include experiments for New York and Texas as examples in Appendix
\ref{app:covid-more}, which also gives more details on the scorecaster and the
results. 

\paragraph{Electricity demand forecasting.}

Next we consider a data set on electricity demand forecasting in New South
Wales \cite{harries1999splice}, which includes half-hourly data from May 7,
1996 to December 5, 1998. For the base forecaster we use a Transformer model
\cite{vaswani2017attention} as implemented in \texttt{darts}
\cite{herzen2022darts}. This is only re-trained daily, to predict the entire
day's demand in one batch; this is a standard approach with Transformer
models due to their high computational cost. For the conformal score, we use the 
asymmetric (signed) residual score. We use the tan integrator as before, and we
use a lightweight Theta model \cite{assimakopoulos2000theta}, re-trained at
every time point (half-hour), as the scorecaster.     

\begin{figure}[ht]
\includegraphics[width=\textwidth]{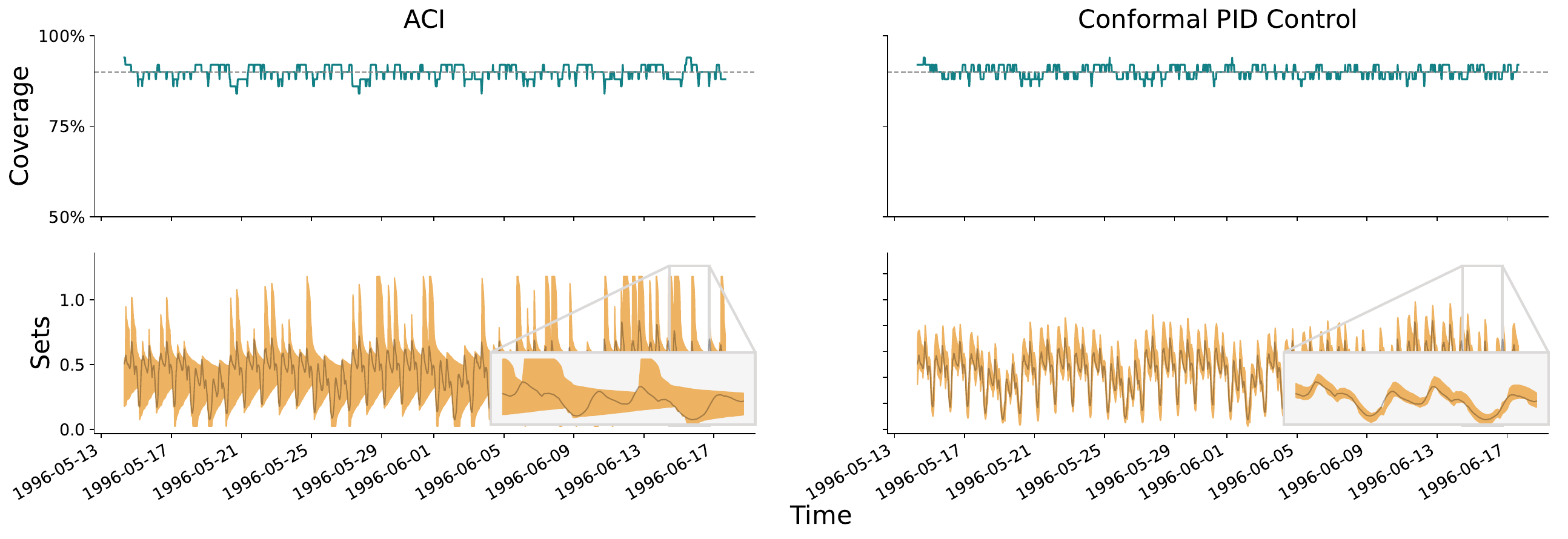}
\caption{Results for electricity demand forecasting. The left column shows
  adaptive conformal inference (ACI), and the right column shows conformal PID control. The base forecaster is a 
  Transformer model, and we use a tan integrator and a Theta scorecaster. The
  format of the figure follows that of Figure \ref{fig:teaser-1}, except the
  nominal coverage is now $1-\alpha = 0.9$, and the coverage is averaged
  over a trailing window of 50 points (we also omit the red dots which mark
  miscoverage events). Summary statistics are available in Table
  \ref{fig:table-teaser-2}.}   
\label{fig:teaser-2}
\end{figure}

The results are shown in the right panel of Figure \ref{fig:teaser-2}, where
adaptive conformal inference (ACI) \cite{gibbs2021adaptive} is also compared in
the left panel. In short, conformal PID control is able to anticipate intraday
variations in the scores, and produces sets that ``hug'' the ground truth
sequence tightly; it achieves tight coverage without generating excessively 
large or infinite sets. The main reason why this is improved is that the
scorecaster has a seasonality component built into its prediction model; in
general, large improvements such as the one exhibited in Figure
\ref{fig:teaser-2} should only be expected when the base forecaster is
imperfect, as is the case here.  

\subsection{Related work}

The adversarial online view of conformal prediction was pioneered by
\cite{gibbs2021adaptive} in the same paper that first introduced ACI.
Since then, there has been significant work towards improving ACI, primarily by
setting the learning rate adaptively \cite{gibbs2022conformal,
  zaffran2022adaptive, bhatnagar2023improved}, and incorporating ideas from
multicalibration to improve conditional coverage \cite{bastani2022practical}.  
It is worth noting that \cite{bhatnagar2023improved} also makes the observation
that the ACI iteration can be generalized to track the quantile of the score
sequence, although their focus is on adaptive regret guarantees. Because the
topic of adaptive learning rates for ACI and related algorithms has already been  
investigated heavily, we do not consider it in the current paper. Any such
method, such as those of \cite{gibbs2022conformal, bhatnagar2023improved} should
work well in conjunction with our proposed algorithms.  

A related but distinct line of work surrounds online \emph{calibration} in the 
adversarial sequence model, which dates back to \cite{foster1999proof,
  foster1998asymptotic}, and connects in interesting ways to both game theory
and online learning. We will not attempt to provide a comprehensive review of
this rich and sizeable literature, but simply highlight
\cite{kuleshov2015calibrated, kuleshov2017estimating, kuleshov2023online} as a 
few interesting examples of recent work. 

Lastly, outside the online setting, we note that several researchers have been
interested in generalizing conformal prediction beyond the i.i.d.\ (or
exchangeable) data setting: this includes \cite{tibshirani2019conformal,
  podkopaev2021distribution, lei2021conformal,  
  fannjiang2022conformal, candes2023conformalized}, and for time series 
prediction, in particular, \cite{chernozhukov2018exact,
  stankeviciute2021conformal, xu2021conformal, xu2023sequential,
  auer2023conformal}. The focus of all of these papers is quite different, and
they all rely on probabilistic assumptions on the data sequence to achieve validity.

\section{Methods}
\label{sec:methods}
We describe the main components of our proposal one at a time, beginning with
the quantile tracker.   

\subsection{Quantile tracking}

The starting point for quantile tracking is to consider the following
optimization problem:
\begin{equation}
\label{eq:quantile-opt}
\minimize_{q \in \R} \; \sum_{t=1}^T \rho_{1-\alpha} (s_t - q), 
\end{equation}
for large $T$, where we abbreviate $s_t = s_t(x_t, y_t)$ for the score of the
test point, and $\rho_{1-\alpha}$ denotes the quantile loss at the level
$1-\alpha$, i.e., $\rho_\tau(z) = \tau |z|$ for $z > 0$ and $(1-\tau) |z|$ for 
$z \leq 0$. The latter is the standard loss used in quantile regression  
\cite{koenker1978regression, koenker2005quantile}. Problem
\eqref{eq:quantile-opt} is thus a simple convex (linear) program that tracks the  
$1-\alpha$ quantile of the score sequence $s_t$, $t \in N$. To see this,
recall that for a continuously distributed random variable $Z$, the expected
loss $\E[\rho_{1-\alpha}(Z - q)]$ is uniquely minimized (over $q \in \R$) at the
level $1-\alpha$ quantile of the distribution of $Z$. 

In the sequential setting, where we receive one score $s_t$ at a time, a natural
and simple approach is to apply \emph{online gradient descent} to
\eqref{eq:quantile-opt}, with a constant learning rate $\eta > 0$. This results
in the update:\footnote{Technically, this is the online subgradient method;
  in a slight abuse of notation, we write $\nabla\rho_{1-\alpha}(0)$ to denote 
  a subgradient of $\rho_{1-\alpha}$ at 0, which can take on any value in
  $[-\alpha, 1-\alpha]$.}   
\begin{align}
\nonumber
q_{t+1} &= q_t + \eta \nabla\rho_{1-\alpha}(s_t - q_t) \\
\label{eq:quantile-tracker}
&= q_t + \eta (\err_t - \alpha),
\end{align}
where the second line follows as $\nabla\rho_{1-\alpha}(s_t - q_t) =  1-\alpha$
if $s_t > q_t \iff \err_t = 1$, and $\nabla\rho_{1-\alpha}(s_t - q_t) = -\alpha$
if $s_t \leq q_t \iff \err_t = 0$. Note that the update in
\eqref{eq:quantile-tracker} is highly intuitive: if we
miscovered (committed an error) at the last iteration then we increase the
quantile, whereas if we covered (did not commit an error) then we decrease 
the quantile.   

Even though it is extremely simple, the quantile tracking iteration
\eqref{eq:quantile-tracker} can achieve long-run coverage own its own, provided
the scores are bounded.   

\begin{proposition}
\label{prop:quantile-tracker}
Let $\{s_t\}_{t \in \N}$ be any sequence of numbers in $[-b,b]$, for $0 < b < 
\infty$. Then the quantile tracking iteration \eqref{eq:quantile-tracker}
satisfies 
\[
\Bigg| \frac{1}{T} \sum_{t=1}^T (\err_t - \alpha) \Bigg| \leq \frac{b +
  \eta}{\eta T}, 
\] 
for any learning rate $\eta > 0$ and $T \geq 1$. In particular, this means
\eqref{eq:quantile-tracker} yields long-run coverage as in
\eqref{eq:long-run-coverage}.   
\end{proposition}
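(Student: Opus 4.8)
The plan is to exploit the fact that the update \eqref{eq:quantile-tracker} is an \emph{additive} recursion, so that the average miscoverage telescopes into a boundary term. Summing \eqref{eq:quantile-tracker} over $t = 1, \dots, T$ gives $q_{T+1} = q_1 + \eta \sum_{t=1}^T (\err_t - \alpha)$, and hence
\[
\frac{1}{T} \sum_{t=1}^T (\err_t - \alpha) = \frac{q_{T+1} - q_1}{\eta T}.
\]
Taking $q_1 = 0$ (the natural initialization), it therefore suffices to prove an \emph{a priori} bound $|q_{T+1}| \le b + \eta$ on the iterate, with a constant depending only on $b$ and $\eta$; dividing by $\eta T$ then gives the claimed inequality, and sending $T \to \infty$ gives \eqref{eq:long-run-coverage}.

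The crux is this a priori bound on the iterates, which I would establish by induction, showing $q_t \in [-b - \eta,\, b + \eta]$ for all $t \ge 1$. The base case is immediate from $q_1 = 0$. For the inductive step I would use two observations. First, a single update changes $q_t$ by at most $\eta$ in magnitude, since $\err_t - \alpha \in \{-\alpha,\, 1-\alpha\} \subseteq [-1, 1]$. Second, the update is self-correcting just outside the interval $[-b, b]$: if $q_t > b$, then $s_t \le b < q_t$ forces $\err_t = 0$, so $q_{t+1} = q_t - \eta\alpha < q_t$; symmetrically, if $q_t < -b$, then $\err_t = 1$ and $q_{t+1} = q_t + \eta(1-\alpha) > q_t$. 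Combining these: if $q_t \le b$ then $q_{t+1} \le b + \eta$, while if $b < q_t \le b + \eta$ then $q_{t+1} < q_t \le b + \eta$, so $q_{t+1} \le b + \eta$ in all cases, and the lower bound follows in the same way.

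I expect the telescoping, the identification of $\err_t - \alpha$ with the subgradient of $\rho_{1-\alpha}$ at $s_t - q_t$, and the final normalization by $T$ to all be routine; the one place that requires care is the a priori bound, and specifically the need to reason separately about the two regimes ($q_t$ inside $[-b,b]$ versus $q_t$ a single step outside it), invoking boundedness of the scores precisely in the second regime to guarantee that the iterate is pulled back toward $[-b,b]$ rather than driven further away. This argument also makes transparent why boundedness of the scores is essential here: without it, a long run of miscoverage events could send $q_t$ off toward infinity unchecked — which is exactly the deficiency that the error-integration (I-control) term in \eqref{eq:main} is designed to repair.
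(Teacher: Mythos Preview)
Your proposal is correct and is essentially the same argument as the paper's: both telescope the update to $q_{t+1}=\eta\sum_{i\le t}(\err_i-\alpha)$ and then run the same two-case induction (inside $[-b,b]$ use the $\eta$-bounded step; just outside, use boundedness of the scores to force the self-correcting sign of $\err_t-\alpha$). The only cosmetic difference is that the paper carries out the induction on $E_t=\sum_{i\le t}(\err_i-\alpha)$ inside the more general Proposition~\ref{prop:integrator} and then specializes with $r_t(x)=\eta x$, $h\equiv b$, $c=1/\eta$, whereas you argue directly on $q_t=\eta E_{t-1}$.
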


The proof is very simple, and we derive it as a corollary of Proposition
\ref{prop:integrator}, given in the next subsection, because the proof reveals
something perhaps unforeseen about the quantile tracker: it acts as an error
integrator, despite only adjusting the quantile based on the most recent time
step.   

\begin{proof}
Without a loss of generality, set $q_1=0$. Unraveling the iteration
\eqref{eq:quantile-tracker} yields  
\begin{equation}
\label{eq:quantile-tracker-integ}
q_{t+1} = \eta \sum_{i=1}^t (\err_i - \alpha).
\end{equation}
For $r_t(x) = \eta x$, $h(t) = b$, we see \eqref{eq:saturation} holds with
$c = 1/\eta$. Proposition \ref{prop:integrator} now applies.
\end{proof}

A few remarks are in order. First, although Proposition
\ref{prop:quantile-tracker} assumes boundedness of the scores, we do not need to
know this bound in order to run \eqref{eq:quantile-tracker} and obtain long-run
coverage. As long as the scores lie in $[-b,b]$ for any finite $b$, the
guarantee goes through---clearly, the quantile tracker proceeds agnostically and
performs the same updates in any case.  

Second, for the learning rate, in practice we typically set $\eta$
heuristically, as some fraction of the highest score over a trailing window 
\smash{$\hat{B}_t = \max\{ s_{t-\Delta+1}, \dots, s_t \}$}. On this scale,
setting \smash{$\eta = 0.1 \hat{B}_t$} usually gives good results, and we use
it in all experiments unless specified otherwise (we also set the window length
$\Delta$ to be the same as the length of the burn-in period for training the
initial base forecaster and scorecaster).\footnote{Technically, this learning
  rate is not fixed, so Proposition \ref{prop:quantile-tracker} does not
  directly apply. However, we can view it as a special case of error integration
  and an application of Proposition \ref{prop:integrator} thus provides the
  relevant coverage guarantee.}  
Extremely high learning rates result in volatile sets, while very low ones may
fail to keep up with rapid changes in the score distribution.  

Finally, the proof reveals that quantile tracking \eqref{eq:quantile-tracker},
which comes from applying online gradient descent to \eqref{eq:quantile-opt},
can be equivalently viewed as a pure linear integrator
\eqref{eq:quantile-tracker-integ} of past coverage errors.  This explains why
quantile tracking is able to achieve coverage: as we will see later, an error
integrator induces a certain kind of self-correcting behavior: after some amount
of excess cumulative miscoverage it forces a coverage event, and vice versa,
for excess cumulative coverage.

\paragraph{ACI as a special case.}

Though it may not be immediately obvious, adaptive conformal inference (ACI) is
actually a special case of the quantile tracker. ACI follows the iteration: 
\[
\alpha_{t+1} = \alpha_t - \eta (\err_t - \alpha),
\]
which is equivalent to
\begin{align*}
1-\alpha_{t+1} &= 1-\alpha_t + \eta (\err_t - \alpha) \\
&= 1-\alpha_t + \eta \nabla\rho_{1-\alpha}(\beta_t - (1-\alpha_t)), 
\end{align*}
for \smash{$\beta_t = \inf\{ \beta : s_t \leq \quantile{\beta}{\{s_1,\dots,
    s_{t-1}\}}$}. This shows that ACI is a particular instance of the quantile 
tracker that uses a secondary score $s'_t = \beta_t$ and quantile $q'_t = 
1-\alpha_t$. Thus, because quantile tracking \eqref{eq:quantile-tracker} is the
same as a linear coverage integrator \eqref{eq:quantile-tracker-integ}, so 
is ACI.

We can see here that ACI transforms unbounded score sequences into bounded ones,
which then implies long-run coverage for any score sequence. This may, however,
come at a cost: ACI can sometimes output infinite or null prediction sets (when
$\alpha_t$ drifts below 0 or above 1, respectively). Direct quantile tracking
on the scale of the original score sequence does not have this behavior. 

\subsection{Error integration}

Error integration is a generalization of quantile tracking that follows the
iteration: 
\begin{equation}
\label{eq:integrator}
q_{t+1} = r_t \Bigg( \sum_{i=1}^t (\err_i - \alpha) \Bigg), 
\end{equation}
where $r_t$ is a saturation function that satisfies \eqref{eq:saturation} for an
admissible function $h$; recall that we use admissible to mean nonnegative,
nondecreasing, and sublinear. As we saw in \eqref{eq:quantile-tracker-integ},
the quantile tracker uses a \emph{constant} threshold function $h$, whereas $h$
is now permitted to grow, as long as it grows sublinearly, i.e., $h(t)/t \to 0$
as $t \to \infty$. A non-constant threshold function $h$ can be desirable
because it means that $r_t$ will ``saturate'' (will hit the conditions on the
right-hand sides in \eqref{eq:saturation}) less often, so corrections for
coverage error will occur less often, and in this sense, a greater degree of
coverage error can be tolerated along the sequence. 

The next proposition, in particular its proof, makes the role of $h$ precise.  

\begin{proposition}
\label{prop:integrator}
Let $\{s_t\}_{t \in \N}$ be any sequence of numbers in $[-b,b]$, where $b > 0$,
and may be infinite. Assume that $r_t$ satisfies \eqref{eq:saturation}, for an
admissible function $h$. Then the error integration iteration
\eqref{eq:integrator} satisfies  
\begin{equation}
\label{eq:integrator-error}
\Bigg| \frac{1}{T} \sum_{t=1}^T (\err_t - \alpha) \Bigg| \leq \frac{ch(T) +
  1}{T}, 
\end{equation}     
for any $T \geq 1$, where $c$ is the constant in \eqref{eq:saturation}. In
particular, this means \eqref{eq:integrator} yields long-run coverage
\eqref{eq:long-run-coverage}.     
\end{proposition}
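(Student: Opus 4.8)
The key quantity to control is the partial sum $S_t = \sum_{i=1}^t (\err_i - \alpha)$. The plan is to show that the saturation condition \eqref{eq:saturation} prevents $|S_t|$ from drifting too far from zero — specifically, it stays within roughly $c \cdot h(t)$ of zero — and then divide by $T$ and use sublinearity of $h$ to conclude. Note first that $S_t$ changes by at most $1$ in absolute value at each step, since $\err_i - \alpha \in \{-\alpha, 1-\alpha\} \subseteq [-1,1]$; this ``bounded increments'' property is what will let me convert a one-sided saturation bound into a genuine envelope.

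The main step is the following invariant, which I would prove by induction on $t$: $S_t \leq c \cdot h(t) + 1$ for all $t$ (and symmetrically $S_t \geq -c\cdot h(t) - 1$). For the inductive step, suppose $S_{t-1} \leq c\cdot h(t-1) + 1 \leq c\cdot h(t) + 1$ using that $h$ is nondecreasing. If $S_{t-1} \leq c\cdot h(t)$, then since the increment is at most $1$ we get $S_t \leq c\cdot h(t) + 1$ and we are done. Otherwise $S_{t-1} > c\cdot h(t) \geq c\cdot h(t-1)$ — wait, I should be careful about which index of $h$ appears in the saturation trigger. The cleaner route: $q_t = r_{t-1}(S_{t-1})$, and if $S_{t-1} > c\cdot h(t-1)$ then \eqref{eq:saturation} forces $q_t \geq b$; but the score $s_t \leq b/2 < b$ (or, when $b=\infty$, the relevant comparison still forces coverage — here I'd need $q_t \ge b \ge s_t$, so with finite $b$ one wants the scores and $q_t$ on the same scale, which is why the hypothesis puts scores in $[-b/2,b/2]$ and $\hat q_t$ in $[-b/2,b/2]$). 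Hence $s_t \leq q_t$, so $\err_t = 0$, so $\err_t - \alpha = -\alpha < 0$, and $S_t < S_{t-1}$; combined with the crude bound $S_{t-1} \le c \cdot h(t-1) + 1 \le c\cdot h(t)+1$ from the previous step of the induction, we keep $S_t \le c\cdot h(t)+1$. The symmetric argument, using the lower branch of \eqref{eq:saturation} to force an error when $S_{t-1} < -c\cdot h(t-1)$, gives the two-sided envelope $|S_t| \leq c\cdot h(t) + 1$.

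From here the conclusion \eqref{eq:integrator-error} is immediate: divide by $T$ to get $\big| \frac1T \sum_{t=1}^T (\err_t - \alpha)\big| = |S_T|/T \leq (c\cdot h(T) + 1)/T$, and since $h$ is sublinear, $h(T)/T \to 0$, so the right-hand side is $o(1)$, which is exactly long-run coverage \eqref{eq:long-run-coverage}.

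The main obstacle I anticipate is bookkeeping around the $b = \infty$ case and the placement of $h(t)$ versus $h(t-1)$: the saturation trigger in \eqref{eq:saturation} is stated with $h(t)$ acting on $r_t$, whereas the update uses $q_{t+1} = r_t(S_t)$, so the error at step $t+1$ is governed by $S_t$ and the threshold $c\cdot h(t)$; one has to index the induction consistently so that ``$S_{t-1}$ large $\Rightarrow$ $q_t$ large $\Rightarrow$ $\err_t = 0$'' lines up correctly, and to make sure the monotonicity of $h$ is used in the right direction. The rest — bounded increments, sublinearity — is routine. It is also worth remarking that Proposition \ref{prop:quantile-tracker} follows by taking $r_t(x) = \eta x$ and $h \equiv b$, for which $c = 1/\eta$ works, recovering the bound $(b+\eta)/(\eta T)$.
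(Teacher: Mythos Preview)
Your proposal is correct and follows essentially the same approach as the paper: both prove by induction that the partial sums satisfy $|S_t| \le c\,h(t)+1$, splitting into the case where the saturation threshold is exceeded (forcing $\err_t=0$ or $\err_t=1$) and the case where it is not (using that the increment is at most $1$), then dividing by $T$. One small correction: in Proposition~\ref{prop:integrator} the scores lie in $[-b,b]$, not $[-b/2,b/2]$ (the latter is the hypothesis of Theorem~\ref{thm:main}, where the extra slack accommodates the scorecaster $\hat q_t$); here the saturation condition $q_t \ge b$ directly gives $s_t \le b \le q_t$ and hence $\err_t=0$, with no need for the $b/2$ buffer.
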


\begin{proof}[Proof of Proposition \ref{prop:integrator}]
Abbreviate \smash{$E_T = \sum_{t=1}^T(\err_t - \alpha)$}. We will prove one 
side of the absolute inequality in \eqref{eq:integrator-error}, namely,
\smash{$E_T \leq ch(T) + 1$}, and the other side follows similarly. We use
induction. The base case, for $T=1$, holds trivially. Now assume the result is
true up to $T-1$. We divide the argument into two cases: either $ch(T-1) <
E_{T-1} \leq ch(T-1) + 1$ or \smash{$E_{T-1} \leq ch(T-1)$}. In the first case,
note that that \eqref{eq:saturation} implies \smash{$q_T = r_t(E_{T-1}) \geq b$} 
and therefore $s_T \leq q_T$ and $\err_t = 0$. This means that    
\[
E_T = E_{T-1} - \alpha \leq ch(T-1) + 1 - \alpha \leq ch(T) + 1,
\]
as $h$ is nondecreasing, which is the desired result at $T$. In the second case,
we just use $\err_T \leq 1$, so  
\[
E_T \leq E_{T-1} + 1-\alpha \leq ch(T-1) + 1-\alpha \leq ch(T) + 1.  
\]
This again gives the desired result at $T$, and completes the proof.
\end{proof}

Importantly, Proposition \ref{prop:integrator} suffices to prove Theorem
\ref{thm:main}. 

\begin{proof}[Proof of Theorem \ref{thm:main}]
We can transform \eqref{eq:main} by setting
\smash{$q'_{t+1} = q_{t+1} - \hat{q}_{t+1}$}, and this becomes an update of the
form \eqref{eq:integrator} with respect to $q'_{t+1}$. Further, the score
sequence in this new parameterization is \smash{$s'_t = s_t - \hat{q}_t$}, which
is in $[-b,b]$ because both $s_t$ and \smash{$\hat{q}_t$} are in $[-b/2,b/2]$.  
Applying Proposition \ref{prop:integrator} gives the result.
\end{proof}

As already mentioned in the introduction, in all our experiments we use a
nonlinear saturation function \smash{$r_t(x) = K_{\text{I}} \tan(x
  \log(t) /(t C_{\text{sat}}))$}, where we set $\tan(x) = \sign(x) \cdot
\infty$ for $x \notin [-\pi/2, \pi/2]$, and \smash{$C_{\text{sat}},
  K_{\text{I}}>0$} are constants that we choose heuristically (described in
Appendix \ref{app:heuristics}). In a sense, this tan integrator is akin to a 
quantile tracker whose learning rate adapts to the current coverage gap. To see 
this, we can use a first-order Taylor approximation, which shows (ignoring
constants):  
\[
q_{t+1} = \tan\Bigg( \frac{\log(t)}{t} \sum_{i=1}^t (\err_i - \alpha) \Bigg)   
\approx q_t + \underbrace{\frac{\log(t)}{t} \sec^2 \Bigg( \frac{\log(t-1)}{t-1}  
  \sum_{i=1}^{t-1} (\err_i - \alpha) \Bigg)}_{\text{effective learning rate}}
\nabla \rho_{1-\alpha}(s_t - q_t) .  
\]
Above, $\sec(x) = 1/\cos(x)$ is the secant function, which has a U-shape; thus
we can see from the above that the effective learning rate is higher for larger
errors. Similar analyses for different integrators will give different adaptive
learning rates; see Appendix \ref{app:learning-rates} for another example.   

\subsection{Scorecasting}

The final piece to discuss is scorecasting. A scorecaster attempts to
forecast $q_{t+1}$ directly, taking advantage of any leftover signal that is not 
captured by the base forecaster.
This is the role played by
\smash{$\hat{q}_{t+1}$} in \eqref{eq:main}.    
Scorecasting may be particularly useful when it is difficult to modify or
re-train the base forecaster. This can occur when the base forecaster is
computationally costly to train (e.g., as in a Transformer model); or it can
occur in complex operational prediction pipelines where frequently updating a
forecasting implementation is infeasible. Another scenario where scorecasting
may be useful is one in which the forecaster and scorecaster have access to
different levels of data. For example, if a public health agency collects epidemic
forecasts from external groups, and forms an ensemble forecast, then the agency 
may have access to finer-grained data that it can use to recalibrate the
ensemble's prediction sets (compared to the level of data granularity granted to
the forecasters originally).  

This motivates the need for scorecasting as a modular layer that ``sits on top''
of the base forecaster and residualizes out systematic errors in the score 
distribution. This intuition is made more precise by recalling, as described
above (following Proposition \ref{prop:integrator}), that scorecasting combined
with error integration as in~\eqref{eq:main} is just a
reparameterization of error integration \eqref{eq:integrator}, where
\smash{$q'_t = q_t - \hat{q}_t$} and \smash{$s'_t = s_t - \hat{q}_t$} are the
new quantile and new score, respectively. A well-executed scorecaster could
reduce the variability in the scores and make them more exchangeable, resulting
in more stable coverage and tighter prediction sets, as seen in Figure
\ref{fig:teaser-2}. On the other hand, using an aggressive scorecaster in a
situation in which there is little or no signal left in the scores can actually
hurt: in this case it would only add variance to the new score sequence $s'_t$,
which could result in more volatile coverage and larger sets.   

There is no limit to what we can choose for the scorecasting model. We might
like to use a model that can simultaneously incorporate seasonality, trends, and 
exogenous covariates. Two traditional choices would be SARIMA (seasonal
autoregressive integrated moving average) and ETS (error-trend-seasonality)
models, but there are many other available methods, such as the Theta model
\cite{assimakopoulos2000theta}, Prophet model \cite{taylor2018forecasting}, and 
various neural network forecasters. A modern review of forecasting methods is
given in \cite{hyndman2018forecasting}.     

\subsection{Putting it all together}

Briefly, we revisit the PID perspective, to recap how quantile tracking, error
integration, and scorecasting fit in and work in combination. It helps to
return to \eqref{eq:pid-update}, which we copy again here for convenience:  
\begin{equation}
\label{eq:pid-revisit}
q_{t+1} = g'_t + \eta (\err_t - \alpha) + r_t \Bigg( \sum_{i=1}^t (\err_i -
\alpha) \Bigg).  
\end{equation}
Quantile tracking is precisely given by taking $g'_t = q_t$ and $r_t = 0$. This
can be seen as equivalent to P control: subtract $q_t$ from both sides in
\eqref{eq:pid-revisit} and treat the increment $u_{t+1} = q_{t+1} - q_t$ as the 
process variable; then in this modified system, quantile tracking is exactly P
control. For this reason, we use ``conformal P control'' to refer to the
quantile tracker in the experiments that follow. Similarly, we use ``conformal
PI control'' to refer to the choice $g'_t = q_t$, and $r_t \not = 0$ as a 
generic integrator (for us, tan is the default). Lastly, ``conformal PID
control'' refers to letting $g'_t$ be a generic scorecaster, and $r_t \not= 0$
be a generic integrator.

\section{Experiments}
\label{sec:experiments}
In addition to the statewide COVID-19 death forecasting experiment described in
the introduction, we run experiments on all combinations of the following data
sets and forecasters.  

\begin{multicols}{2}
Data sets:
\begin{itemize}
\item Electricity demand in New South Wales \cite{harries1999splice} 
\item Return (log price) of Amazon, Google, and Microsoft stock
  \cite{nguyen2018stock} 
\item Temperature in Delhi \cite{vrao2017climate} 
\end{itemize}

\columnbreak

Forecasters (all via \texttt{darts} \cite{herzen2022darts}) :
\begin{itemize}
\item Autoregressive (AR) model with 3 lags
\item Theta model with $\theta=2$ \cite{assimakopoulos2000theta} 
\item Prophet model \cite{taylor2018forecasting}
\item Transformer model \cite{vaswani2017attention}
\end{itemize}
\end{multicols}

In all cases except for the COVID-19 forecasting data set, we: re-train the
base forecaster at each time point; construct prediction sets using the
asymmetric (signed) residual score; and use a Theta model for the
scorecaster. For the COVID-19 forecasting setting, we: use the given ensemble
model as the base forecaster (no training at all); construct prediction sets
using the asymmetric quantile score; and use an $\ell_1$-penalized quantile  
regression as the scorecaster, fit on features derived from previous forecasts,
cases, and deaths, as described in the introduction. And lastly, in all cases,
we use a tan function for the integrator with constants chosen heuristically, as
described in Appendx \ref{app:heuristics}.

The results that we choose to show in the subsections below are meant to
illustrate key conceptual points (differences between the methods). Additional
results are presented in Appendix \ref{app:further-experiments}. Our GitHub
repository, \url{https://github.com/aangelopoulos/conformal-time-series},
provides the full suite of evaluations.  

\subsection{ACI versus quantile tracking} 
\label{subsec:aci-vs-qt}

\begin{figure}[p]
\centering
\includegraphics[width=\linewidth]{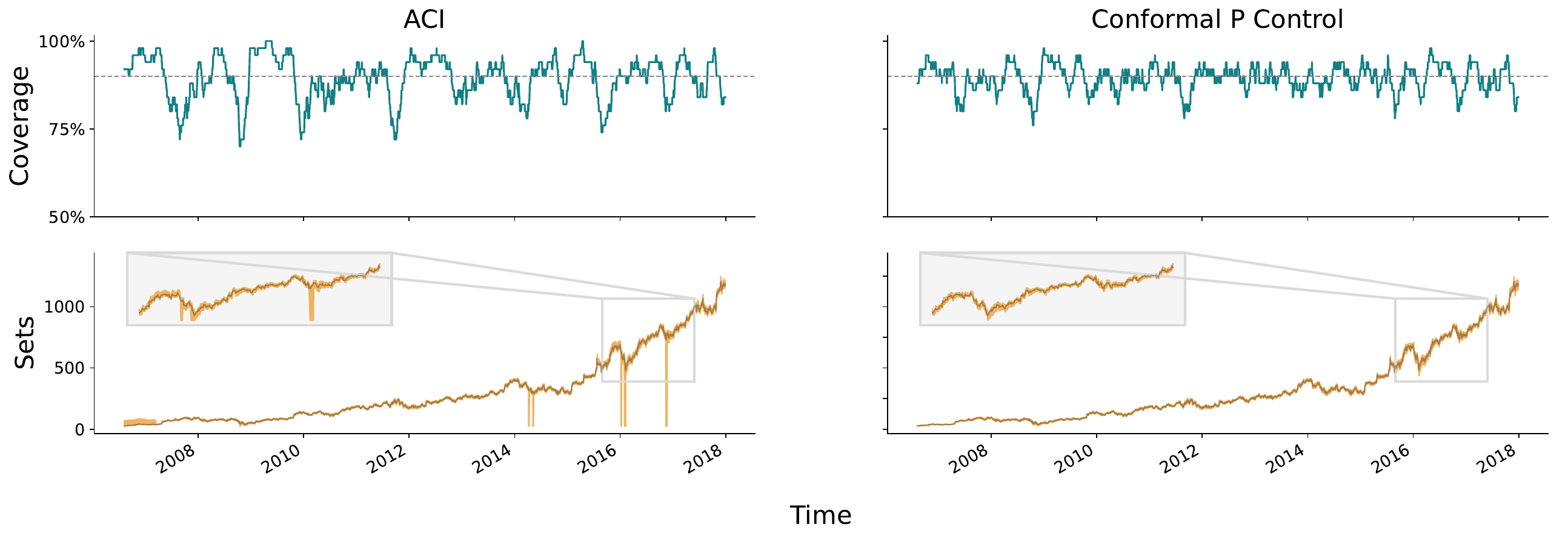}

\smallskip\smallskip
{\footnotesize \begin{tabular}{lllllllll}
\toprule
& \multicolumn{2}{c}{AR}& \multicolumn{2}{c}{Prophet}& \multicolumn{2}{c}{Theta}& \multicolumn{2}{c}{Transformer} \\
& ACI & P Ctrl & ACI & P Ctrl & ACI & P Ctrl & ACI & P Ctrl \\
\midrule
Marginal coverage & 0.894 & 0.894 & 0.904 & 0.888 & 0.895 & 0.894 & 0.906 & 0.887 \\
Longest err sequence & 6 & 3 & 13 & 6 & 5 & 5 & 21 & 9 \\
Average set size & $\infty$ & 17.6 & $\infty$ & 51.7 & $\infty$ & 17.8 & $\infty$ & 70.4 \\
Median set size & 13.4 & 13.3 & 50.7 & 37.3 & 12.8 & 13.1 & 61.9 & 44.3 \\
75\% quantile set size & 28.4 & 22.3 & 117 & 72.1 & 20.9 & 22.6 & 179 & 98.4 \\
90\% quantile set size & 44.1 & 37.7 & 236 & 114 & 38.6 & 38.4 & 302 & 153 \\
95\% quantile set size & 49.9 & 46.2 & $\infty$ & 140 & 48.2 & 46.9 & $\infty$ & 196 \\
\bottomrule
\end{tabular}
} 
\caption{Results for forecasting Amazon stock return, comparing ACI and quantile
  tracking (P control). The plots show AR as the base forecaster; the table
  summarizes the results of all four base forecasters. We use the default
  learning rates for both ACI and quantile tracking: $\eta = 0.005$ and
  \smash{$\eta = 0.1 \hat{B}_t$}, respectively.} 
\label{fig:aci-vs-qt-1}

\bigskip\bigskip\bigskip
\includegraphics[width=\linewidth]{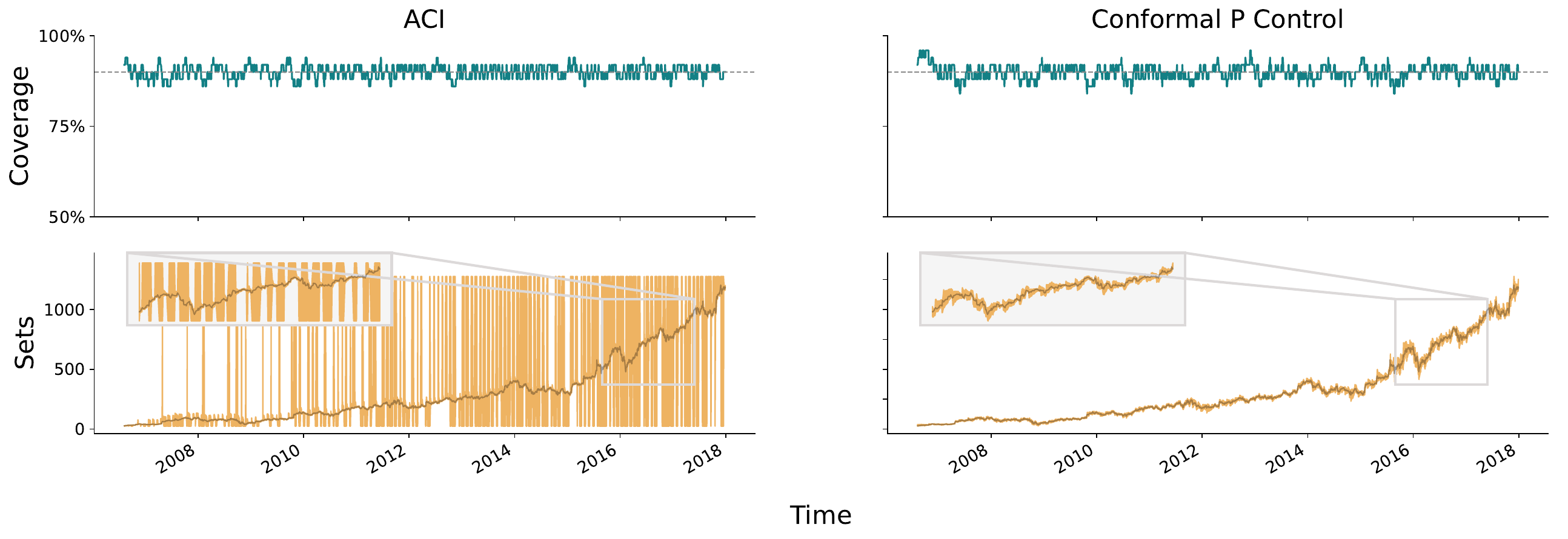}

\smallskip\smallskip
{\footnotesize \begin{tabular}{lllllllll}
\toprule
& \multicolumn{2}{c}{AR}& \multicolumn{2}{c}{Prophet}& \multicolumn{2}{c}{Theta}& \multicolumn{2}{c}{Transformer} \\
& ACI & P Ctrl & ACI & P Ctrl & ACI & P Ctrl & ACI & P Ctrl \\
\midrule
Marginal coverage & 0.9 & 0.898 & 0.901 & 0.897 & 0.9 & 0.896 & 0.902 & 0.897 \\
Longest err sequence & 2 & 2 & 8 & 3 & 2 & 2 & 7 & 3 \\
Average set size & $\infty$ & 28.4 & $\infty$ & 42.6 & $\infty$ & 27.8 & $\infty$ & 54.2 \\
Median set size & $\infty$ & 18.6 & 60.6 & 31.9 & $\infty$ & 18.7 & 46 & 35.5 \\
75\% quantile set size & $\infty$ & 37.4 & $\infty$ & 56.5 & $\infty$ & 37.8 & $\infty$ & 69.7 \\
90\% quantile set size & $\infty$ & 66.3 & $\infty$ & 93.4 & $\infty$ & 63.4 & $\infty$ & 123 \\
95\% quantile set size & $\infty$ & 81.8 & $\infty$ & 116 & $\infty$ & 78.5 & $\infty$ & 164 \\
\bottomrule
\end{tabular}
}
\caption{As in Figure \ref{fig:aci-vs-qt-1}, but with larger learning rates for 
  ACI and quantile tracking: $\eta = 0.1$ and \smash{$\eta = 0.5 \hat{B}_t$},
  respectively.}
\label{fig:aci-vs-qt-2}
\end{figure}

We forecast the daily Amazon (AMZN) opening stock price from 2006--2014. We do
this in log-space (hence predicting the return of the stock). Figure
\ref{fig:aci-vs-qt-1} compares ACI and the quantile tracker, each with its
default learning rate: $\eta = 0.005$ for ACI, and \smash{$\eta = 0.1
  \hat{B}_t$} for quantile tracking. We see that the coverage from each method
is decent, but oscillates nontrivially around the nominal level of $1-\alpha =
0.9$ (with ACI generally having larger oscillations). Figure
\ref{fig:aci-vs-qt-2} thus increases the learning rate for each method: $\eta =
0.1$ for ACI, and \smash{$\eta = 0.5 \hat{B}_t$} for the quantile tracker. We
now see that both deliver very tight coverage. However, ACI does so by
frequently returning infinite sets; meanwhile, the corrections to the sets made
by the quantile tracker are nowhere near as aggressive. 

As a final comparison, in Appendix \ref{app:clipped-aci-vs-qt}, we modify ACI 
to clip the sets in a way that disallows them from ever being infinite. This
heuristic may be used by practitioners that want to guard against infinite sets,
but it no longer has a validity guarantee for bounded or unbounded scores. The
results in the appendix indicate that the quantile tracker has similar coverage
to this procedure, and usually with smaller sets.

\subsection{The effect of integration}

Next we forecast the daily Google (GOOGL) opening stock price from 2006--2014
(again done in log-space). Figure \ref{fig:qt-vs-integrator} compares the
quantile tracker without and without an additional integrator component (P
control versus PI control). We purposely choose a very small learning rate,
\smash{$\eta = 0.01 \hat{B}_t$}, in order to show how the integrator can
stabilize coverage, which it does nicely for most of the time series. The
coverage of PI control begins to oscillate more towards the end of the sequence,
which we attribute at least in part to the fact that the integrator measures
coverage errors accumulated over \emph{all time}---and by the end of a long
sequence, the marginal coverage can still be close to $1-\alpha$ even if the
local coverage deviates more wildly. This can be addressed by using a local
version of the integrator, an idea we return to in the discussion. 

\begin{figure}[ht]
\centering
\includegraphics[width=\linewidth]{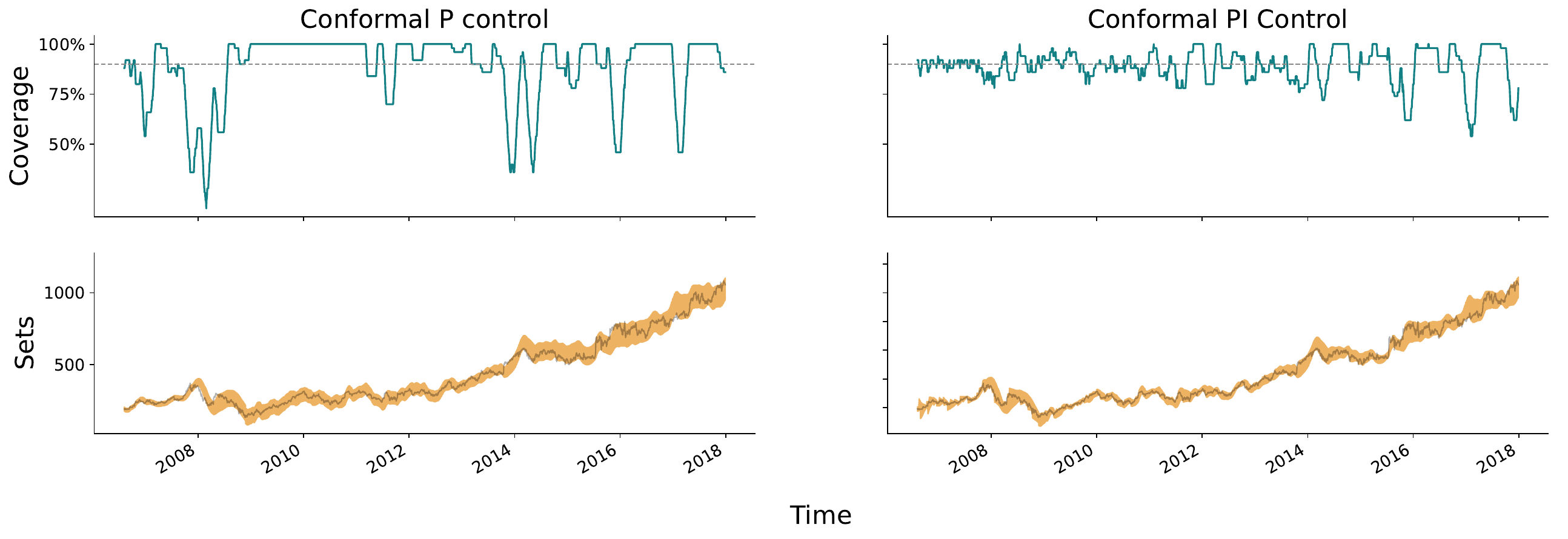}

\smallskip\smallskip
{\footnotesize \begin{tabular}{lllllllll}
\toprule
& \multicolumn{2}{c}{AR}& \multicolumn{2}{c}{Prophet}& \multicolumn{2}{c}{Theta}& \multicolumn{2}{c}{Transformer} \\
& P Ctrl & PI Ctrl & P Ctrl & PI Ctrl & P Ctrl & PI Ctrl & P Ctrl & PI Ctrl \\
\midrule
Marginal coverage & 0.895 & 0.896 & 0.882 & 0.888 & 0.891 & 0.896 & 0.817 & 0.897 \\
Longest err sequence & 6 & 3 & 31 & 13 & 58 & 6 & 106 & 10 \\
Average set size & 19.2 & 24.7 & 95.6 & 68.2 & 42.9 & 33.5 & 281 & 108 \\
Median set size & 16.4 & 22.8 & 91.5 & 60.7 & 44.3 & 32.8 & 219 & 73.8 \\
75\% quantile set size & 21.3 & 30.1 & 120 & 88 & 60 & 42.6 & 386 & 121 \\
90\% quantile set size & 30.7 & 38.4 & 150 & 118 & 66.2 & 51.3 & 482 & 244 \\
95\% quantile set size & 32.8 & 44.7 & 156 & 139 & 69.1 & 57.5 & 524 & 341 \\
\bottomrule
\end{tabular}
} 
\caption{Results for forecasting Google stock return, comparing quantile
  tracking with and without the integrator (P control versus PI control). The
  plots show Prophet as the base forecaster; the table summarizes the results
  of all four base forecasters. We purposely use a very small learning rate,
  \smash{$\eta = 0.01 \hat{B}_t$}, in order to show how the integrator can 
  stabilize coverage.}   
\label{fig:qt-vs-integrator}
\end{figure}

\subsection{The effect of scorecasting}

Figures \ref{fig:teaser-1} and \ref{fig:teaser-2} already showcase examples in
which scorecasting offers significant improvement in coverage and set
sizes. Recall that these were settings in which the base forecaster produces
errors (scores) that have predictable trends. Further examples in the COVID-19 
forecasting setting, which display similar benefits to scorecasting, are given
in Appendix \ref{app:covid-more}.     

We emphasize that it is not always the case that scorecasting will help. In some
settings, scorecasting may introduce enough variance into the new score sequence
that the coverage or sets will degrade in stability. (For example, this will
happen if we run a highly complex scorecaster on a sequence of i.i.d.\ scores,
where there are no trends whatsoever.) In practice, scorecasters should be
designed with care, just as one would design a base forecaster; it is unlikely
that using ``out of the box'' techniques for scorecasting will be robust enough,
especially in high-stakes problems. Appendix \ref{app:further-experiments}
provides examples in which scorecasting, run across all settings using a generic
Theta model, can hurt (for example, it adds noticeable variance to the coverage
and sets in some instances within the Amazon data setting).

\section{Discussion}
Our work presents a framework for constructing prediction sets in time series
that is analogous (and indeed formally equivalent) to PID control. The framework
consists of quantile tracking (P control), which is simply online gradient
descent applied to the quantile loss; error integration (I control) to stabilize
coverage; and scorecasting (D control) to remove systematic trends in the scores
(errors made by the base forecaster).  

We found the combination of quantile tracking and integration to consistently
yield robust and favorable performance in our experiments. Scorecasting provides
additional benefits if there are trends left in scores that are predictable (and
the scorecaster is well-designed), as is the case in some of our
examples. Otherwise, scorecasting may add variability and make the coverage and
prediction sets more volatile. Overall, designing the scorecaster (which
includes the choice to even use one at all) is an important modeling step,
just like the design of the base forecaster.

It is worth emphasizing that, with the exception of the COVID-19 forecasting
example, our experiments are intended to be illustrative and we did not look
to use state-of-the-art forecasters, or include any and all possibly relevant
features for prediction. Further, while we found that using heuristics to set 
constants (such as the learning rate $\eta$, and constants
\smash{$C_{\text{sat}}, K_{\text{I}}$} for the tan integrator) worked decently 
well, we believe that more rigorous techniques, along the lines of
\cite{gibbs2022conformal, bhatnagar2023improved}, can be used to tune these 
adaptively in an online fashion.

We now present an extension of our analysis to conformal risk
control \cite{angelopoulos2022conformal, bates2021distribution,
  feldman2022achieving}. In this problem setting, we are given a sequence of
loss functions \smash{$L_t : 2^{\cY} \times \cY \to [0,1]$} satisfying
$L_t(\cY, y) = 0$ for all $y$, and $L_t(\emptyset, y) = 1$ for all $y$. The goal
is to bound the deviation of the average risk \smash{$\frac{1}{T} \sum_{t=1}^T 
  L_t(\cC_t, y_t)$} from $\alpha$. We state a result for the integrator below,
and give its proof in Appendix \ref{app:risk-control}.  

\begin{proposition}
\label{prop:integrator-rc}
Consider the iteration \smash{$q_{t+1} = r_t (\sum_{i=1}^t (L_i(\cC_i, y_i) -
  \alpha))$}, with $L_t$ as above. Assume that $r_t$ satisfies
\eqref{eq:saturation}, for an admissible function $h$. Also assume that 
\smash{$C_t(\cC_t, y_t) = \emptyset$} if $q_t \leq -b$ and $\cY$ if $q_t \geq
b$, where $b > 0$, and may be infinite. Then for all $T \geq 1$, 
\begin{equation}
\label{eq:integrator-error-rc}
\Bigg|\frac{1}{T} \sum_{t=1}^T (L_t(\cC_t, y_t) - \alpha) \Bigg| \leq
\frac{ch(T) + 1}{T}. 
\end{equation}
for any $T \geq 1$, where $c$ is the constant in \eqref{eq:saturation}. 
\end{proposition}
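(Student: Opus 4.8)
The plan is to transcribe the proof of Proposition~\ref{prop:integrator} almost verbatim, with the miscoverage indicator $\err_t$ replaced by the loss $L_t(\cC_t, y_t)$. The only properties of $\err_t$ that the earlier argument actually uses are: $0 \le \err_t \le 1$; whenever the running sum of errors is large, the saturation property~\eqref{eq:saturation} pushes $q_t$ up to (at least) $b$, which forces $\err_t = 0$; and, symmetrically, whenever the running sum is very negative, $q_t$ drops to (at most) $-b$, which forces $\err_t = 1$. In the risk-control setting the first property is immediate because $L_t$ has range $[0,1]$, and the other two are exactly the content of the two added hypotheses: $q_t \ge b \Rightarrow \cC_t = \cY \Rightarrow L_t(\cC_t, y_t) = L_t(\cY, y_t) = 0$, and $q_t \le -b \Rightarrow \cC_t = \emptyset \Rightarrow L_t(\cC_t, y_t) = L_t(\emptyset, y_t) = 1$. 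So all the moving parts are already in place.

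Concretely, I would abbreviate $E_T = \sum_{t=1}^T (L_t(\cC_t, y_t) - \alpha)$ and prove $E_T \le ch(T) + 1$ by induction on $T$, the companion bound $-E_T \le ch(T)+1$ following by the mirror-image argument; together these give~\eqref{eq:integrator-error-rc}. The base case $T = 1$ is trivial since $|E_1| \le 1 \le ch(1) + 1$ (using $h \ge 0$, $c>0$). For the inductive step on the upper bound, I would split on whether $ch(T-1) < E_{T-1} \le ch(T-1) + 1$ or $E_{T-1} \le ch(T-1)$. In the first case $E_{T-1} \ge ch(T-1)$, so~\eqref{eq:saturation} gives $q_T = r_{T-1}(E_{T-1}) \ge b$, hence $\cC_T = \cY$ and $L_T(\cC_T, y_T) = 0$, so $E_T = E_{T-1} - \alpha \le ch(T-1) + 1 - \alpha \le ch(T) + 1$ by monotonicity of $h$ and $\alpha \ge 0$. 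In the second case I just use $L_T(\cC_T, y_T) \le 1$ to get $E_T \le ch(T-1) + 1 - \alpha \le ch(T)+1$. The lower bound is identical with signs flipped: when $-ch(T-1) - 1 \le E_{T-1} < -ch(T-1)$ the second half of~\eqref{eq:saturation} forces $q_T \le -b$, hence $\cC_T = \emptyset$ and $L_T(\cC_T, y_T) = 1$, giving $E_T \ge -ch(T-1) - \alpha \ge -ch(T) - 1$; otherwise $E_{T-1}\ge -ch(T-1)$ and $L_T(\cC_T, y_T) \ge 0$ suffice.

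I do not expect a genuine obstacle: this is a routine adaptation of an already-short argument. The one point that needs a moment's care is lining up the boundaries of the saturation condition with the degeneracy hypotheses on $\cC_t$ — i.e., checking that the case split enters the regime where $r_t(x) \ge b$ (not merely $>b$) precisely where the hypothesis declares $\cC_t = \cY$, and symmetrically on the negative side. Since~\eqref{eq:saturation} already delivers $r_t(x) \ge b$ on all of $\{x \ge ch(t)\}$ and the case split uses the strict inequality $E_{T-1} > ch(T-1)$ to enter that regime, the chain of inequalities closes cleanly, exactly as in the proof of Proposition~\ref{prop:integrator}.
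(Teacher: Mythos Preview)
Your proposal is correct and follows essentially the same approach as the paper's proof: the same induction on $T$, the same case split on whether $E_{T-1}$ exceeds $ch(T-1)$, and the same use of saturation plus the degeneracy hypotheses to force $L_T(\cC_T,y_T)=0$ (or $1$, on the other side). You supply slightly more detail than the paper (spelling out the lower-bound mirror and the reason $L_T=0$ via $\cC_T=\cY$), but the argument is the same.
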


We briefly conclude by mentioning that we believe many other extensions are
possible, especially with respect to the integrator. Broadly, we can choose to 
integrate in a kernel-weighted fashion
\begin{equation}
\label{eq:kernel-integrator}
r_t \Bigg( \sum_{i=1}^t (\err_i - \alpha) \cdot K \big( (i, x_i, y_i), (t, x_t,
y_t) \big) \Bigg)
\end{equation}
As a special case, the kernel could simply assign weight 1 if $t-i \leq w$, and
weight 0 otherwise, which would result in an integrator that aggregates coverage
over a trailing window of length $w$. This can help consistently sustain better
local coverage, for long sequences. As another special case, the kernel could  
assign a weight based on whether $x_i$ and $x_t$ lie in the same bin in some
pre-defined binning of $\cX$ space, which may be useful for problems with group
structure (where we want group-wise coverage). Many other choices and forms of
kernels are possible, and it would be interesting to consider adding together a
number of such choices \eqref{eq:kernel-integrator} in combination, in a
multi-resolution flavor, for the ultimate quantile update.

\bibliographystyle{alpha}
\bibliography{bibliography}

\clearpage
\appendix

\section{Conformal risk control guarantee}
\label{app:risk-control}

\begin{proof}[Proof of Proposition \ref{prop:integrator-rc}]
The proof is similar to that of Proposition \ref{prop:integrator}---as in  
that proof, we only prove one side of the absolute inequality
\eqref{eq:integrator-error-rc}, and use induction. Abbreviate \smash{$E_T = 
  \sum_{t=1}^T (L_i(\cC_i, y_i) - \alpha))$}. The base case holds trivially. For
the inductive step, either $ch(T-1) < E_{T-1} \leq ch(T-1)+1$ or $E_{T-1} \leq 
ch(T-1)$. In the first case, we have saturated, so $L_t(\cC_t, y_t) = 0$, and   
\[
E_T = E_{T-1} - \alpha \leq ch(T-1) + 1 - \alpha \leq ch(T) + 1,
\]
as $h$ is nondecreasing, which is the desired result at $T$. In the second case,
we just use the boundedness of the loss $L_t(\cC_t,y_t) \leq 1$, so   
\[
E_T \leq E_{T-1} + 1-\alpha \leq ch(T-1) + 1-\alpha \leq ch(T) + 1.  
\]
This again gives the desired result at $T$, and completes the proof.
\end{proof}

\section{Heuristics for setting constants}
\label{app:heuristics}

Consider the tan integrator \smash{$r_t(x) = K_{\text{I}} \tan(x \log(t) /(t
  C_{\text{sat}}))$}, where we set $\tan(x) = \sign(x) \cdot \infty$ whenever $x
\notin [-\pi/2, \pi/2]$, and \smash{$C_{\text{sat}}, K_{\text{I}}>0$} are
constants. The constant \smash{$C_{\text{sat}}$} is primarily in charge of
guaranteeing that by time $T$, we want to have an absolute guarantee of at least
$1 - \alpha - \delta$ coverage. Then we can set 
\[ 
C_{\text{sat}} = \frac{2}{\pi}\big( \lceil\log(T)\delta\rceil - 1/\log(T) \big)
\]
to ensure the tan function has an asymptote at the correct point. 
The purpose of the constant \smash{$K_{\text{I}}$} is to place the integrator 
on the same scale as the scores. So if $B'$ is a hypothesized bound on the 
magnitude of the scores, then one can set \smash{$K_{\text{I}} = B'$}. In
practice, these heuristics can be taken as a starting place, and then the
numbers can be fine-tuned during a burn-in period by hand or algorithmically. As alluded to previously, we
believe there is room for work in the style of \cite{gibbs2022conformal,
  bhatnagar2023improved} to rigorously tune these parameters online, but it is
not the focus of our paper.   

\section{Quantile tracking with decaying learning rate} 
\label{app:learning-rates}

Consider \smash{$r_t(x) = \eta x/\sqrt{t}$}. (This will give long-run coverage 
only for bounded scores, because condition \eqref{eq:saturation} is only met for
finite and not infinite $b$.) Then \eqref{eq:integrator} becomes \smash{$q_{t+1}
  = \frac{\eta}{\sqrt{t}} \sum_{i=1}^t (\err_i - \alpha)$}, which can be
rewritten as 
\[
q_{t+1} = \frac{\sqrt{t-1}}{\sqrt{t}} \frac{\eta}{\sqrt{t-1}} \sum_{i=1}^{t-1}
(\err_i - \alpha) + \frac{\eta}{\sqrt{t}}(\err_i - \alpha) \approx q_t +
\frac{\eta}{\sqrt{t}}(\err_{t} - \alpha). 
\]
This is approximately the quantile tracker \eqref{eq:quantile-tracker} with a
decaying learning rate, on the order of \smash{$1/\sqrt{t}$}.  

\section{Comparison to clipped ACI} 
\label{app:clipped-aci-vs-qt}

Figures \ref{fig:clipped-1} and \ref{fig:clipped-2} compare the quantile tracker
to a clipped version of ACI which disallows infinite-sized sets by clipping the
sets to the largest score seen so far. 

\begin{figure}[p]
\centering
\includegraphics[width=\linewidth]{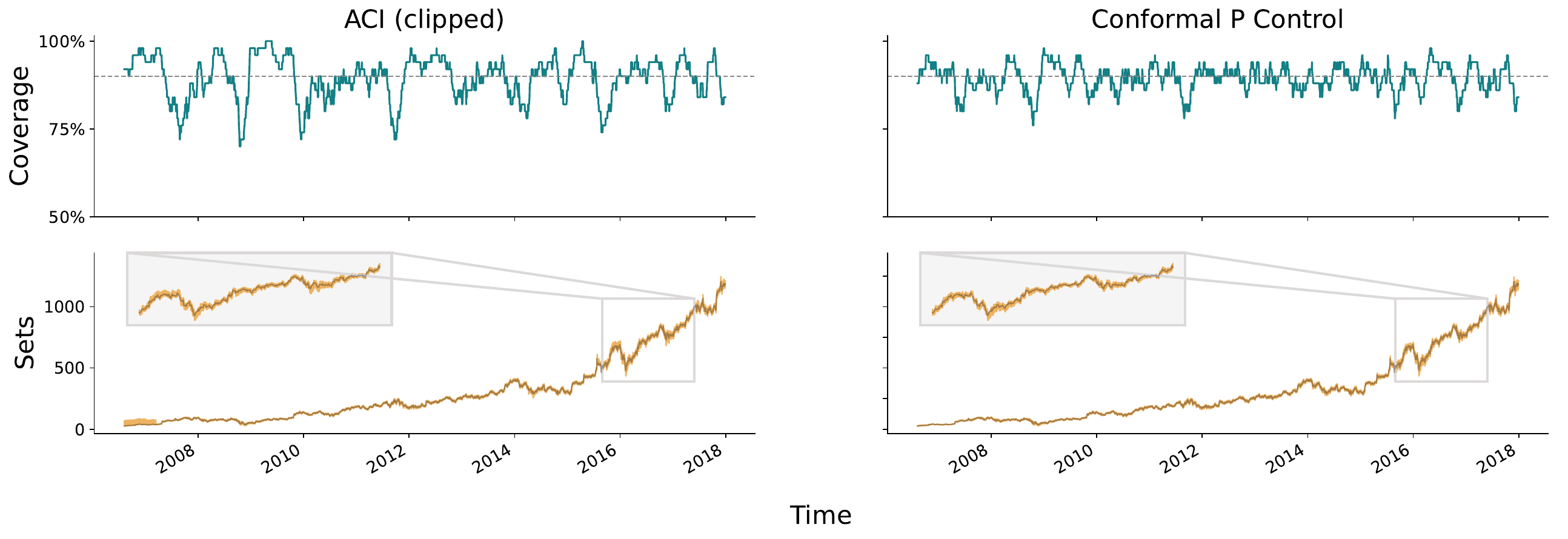}

\smallskip\smallskip
{\footnotesize \begin{tabular}{lllllllll}
\toprule
& \multicolumn{2}{c}{AR}& \multicolumn{2}{c}{Prophet}& \multicolumn{2}{c}{Theta}& \multicolumn{2}{c}{Transformer} \\
& ACI (clipped) & P Ctrl & ACI (clipped) & P Ctrl & ACI (clipped) & P Ctrl & ACI (clipped) & P Ctrl \\
\midrule
Marginal coverage & 0.898 & 0.898 & 0.884 & 0.897 & 0.898 & 0.896 & 0.884 & 0.897 \\
Longest err sequence & 2 & 2 & 8 & 3 & 2 & 2 & 7 & 3 \\
Average set size & 44.9 & 28.4 & 52.6 & 42.6 & 43.3 & 27.8 & 60.5 & 54.2 \\
Median set size & 41.8 & 18.6 & 38.8 & 31.9 & 27.3 & 18.7 & 36.6 & 35.5 \\
75\% quantile set size & 58.9 & 37.4 & 66.9 & 56.5 & 59.5 & 37.8 & 85.5 & 69.7 \\
90\% quantile set size & 93.9 & 66.3 & 137 & 93.4 & 94.7 & 63.4 & 148 & 123 \\
95\% quantile set size & 136 & 81.8 & 166 & 116 & 136 & 78.5 & 182 & 164 \\
\bottomrule
\end{tabular}
}
\caption{As in Figure \ref{fig:aci-vs-qt-1}, but with clipped ACI.}
\label{fig:clipped-1}

\bigskip\bigskip\bigskip
\includegraphics[width=\linewidth]{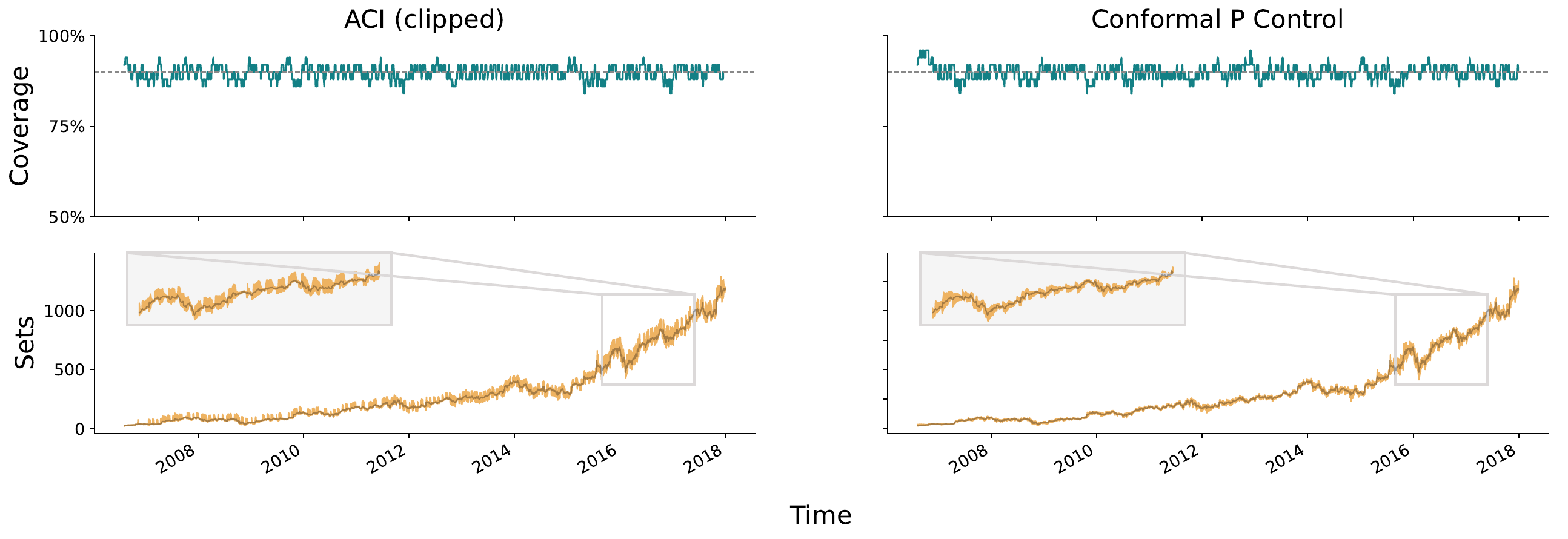}

\smallskip\smallskip
{\footnotesize  \begin{tabular}{lllllllll}
\toprule
& \multicolumn{2}{c}{AR}& \multicolumn{2}{c}{Prophet}& \multicolumn{2}{c}{Theta}& \multicolumn{2}{c}{Transformer} \\
& ACI (clipped) & P Ctrl & ACI (clipped) & P Ctrl & ACI (clipped) & P Ctrl & ACI (clipped) & P Ctrl \\
\midrule
Marginal coverage & 0.894 & 0.894 & 0.896 & 0.888 & 0.895 & 0.894 & 0.89 & 0.887 \\
Longest err sequence & 6 & 3 & 13 & 6 & 5 & 5 & 21 & 9 \\
Average set size & 19.5 & 17.6 & 69.5 & 51.7 & 17.9 & 17.8 & 115 & 70.4 \\
Median set size & 13.4 & 13.3 & 48.7 & 37.3 & 12.8 & 13.1 & 61.7 & 44.3 \\
75\% quantile set size & 27.9 & 22.3 & 91.1 & 72.1 & 20.7 & 22.6 & 165 & 98.4 \\
90\% quantile set size & 44 & 37.7 & 168 & 114 & 38.5 & 38.4 & 248 & 153 \\
95\% quantile set size & 48.7 & 46.2 & 195 & 140 & 47.2 & 46.9 & 304 & 196 \\
\bottomrule
\end{tabular}
}
\caption{As in Figure \ref{fig:aci-vs-qt-2}, but with clipped ACI.}
\label{fig:clipped-2}
\end{figure}

\section{More details on COVID-19 forecasting}
\label{app:covid-more}

In this experiment, the scorecaster receives as input the three most recent
scores (i.e., quantile errors) of the ensemble forecaster, as well as the three 
most recent case and death counts, from \emph{all 50 states}. The scorecaster is
an $\ell_1$-penalized quantile regression as implemented by
\texttt{sklearn.linear\_model.QuantileRegressor}. We fixed tuning parameter for
the  $\ell_1$ penalty at 10; in our experience, the performance of the
scorecaster was fairly robust to this choice. Automatic selection (e.g., using
cross-validation) could be the topic of future study. Figures \ref{fig:ny} and
\ref{fig:tx} shows the analogous experiments but for forecasting in New York and
Texas. 

\begin{table}[t]
\centering
{\footnotesize \begin{tabular}{lll}
\toprule
& Base Forecaster & Conformal PID Ctrl \\
\midrule
Marginal coverage & 0.82 & 0.86 \\
Longest err sequence & 6 & 2 \\
Average set size & 625 & 858 \\
Median set size & 512 & 688 \\
75\% quantile set size & 754 & 1.01e+03 \\
90\% quantile set size & 1.12e+03 & 1.47e+03 \\
95\% quantile set size & 1.45e+03 & 1.8e+03 \\
\bottomrule
\end{tabular}
}
\caption{Summary statistics for COVID-19 death forecasting in California, as in
  Figure \ref{fig:teaser-1}.}
\label{fig:table-teaser-1}

\bigskip\bigskip
{\footnotesize \begin{tabular}{lllll}
\toprule
& \multicolumn{2}{c}{AR}& \multicolumn{2}{c}{Transformer} \\
& ACI & Conformal PID Control & ACI & Conformal PID Ctrl \\
\midrule
Marginal coverage & 0.899 & 0.9 & 0.899 & 0.901 \\
Longest err sequence & 3 & 2 & 3 & 2 \\
Average set size & $\infty$ & 0.177 & $\infty$ & 0.174 \\
Median set size & 0.406 & 0.178 & 0.426 & 0.175 \\
75\% quantile set size & 0.484 & 0.21 & 0.574 & 0.206 \\
90\% quantile set size & 0.672 & 0.236 & $\infty$ & 0.233 \\
95\% quantile set size & $\infty$ & 0.252 & $\infty$ & 0.249 \\
\bottomrule
\end{tabular}
}
\caption{Summary statistics for electricity forecasting, as in Figure
  \ref{fig:teaser-2}. Results for the Prophet and Theta models are not available
  because \texttt{darts} does not support intermittent retraining for these
  algorithms.} 
\label{fig:table-teaser-2}
\end{table}

\begin{figure}[p]
\centering
\includegraphics[width=\textwidth]{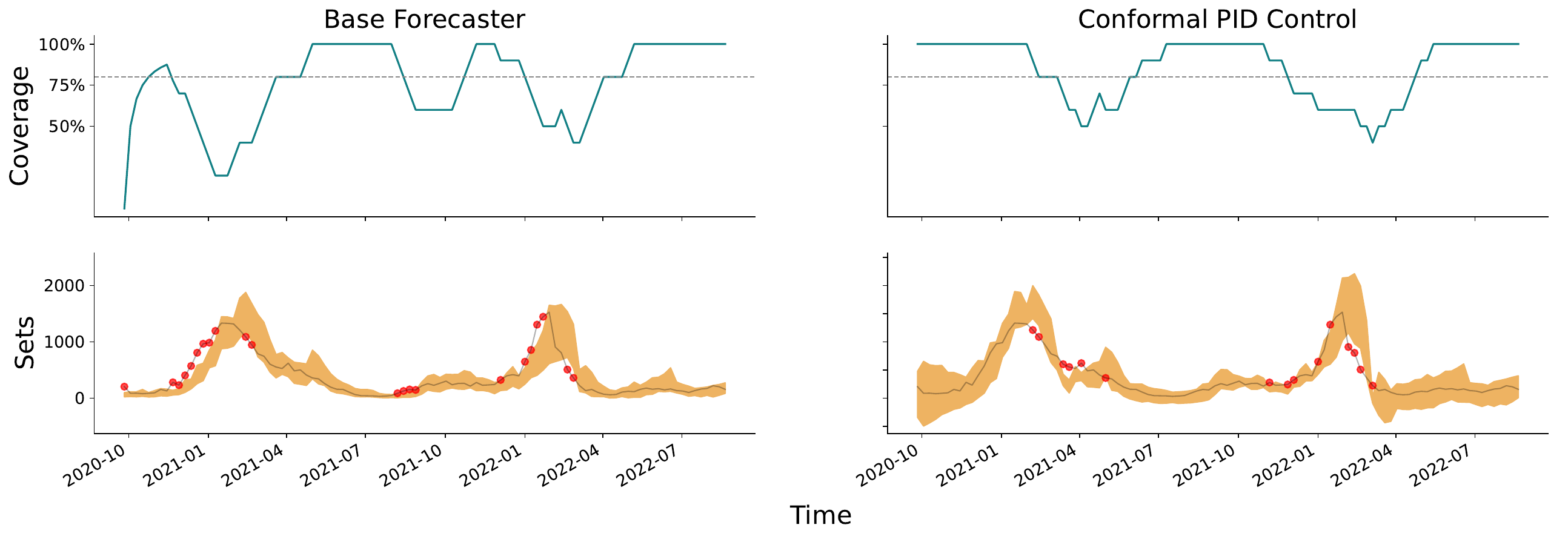}

\smallskip\smallskip
{\footnotesize \begin{tabular}{lll}
\toprule
& Base Forecaster & Conformal PID Ctrl \\
\midrule
Marginal coverage & 0.78 & 0.85 \\
Longest err sequence & 8 & 3 \\
Average set size & 310 & 472 \\
Median set size & 249 & 440 \\
75\% quantile set size & 388 & 608 \\
90\% quantile set size & 667 & 821 \\
95\% quantile set size & 754 & 1.03e+03 \\
\bottomrule
\end{tabular}
}
\caption{Results for 4-week ahead COVID-19 death forecasting in New York.}
\label{fig:ny}

\bigskip\bigskip\bigskip
\includegraphics[width=\textwidth]{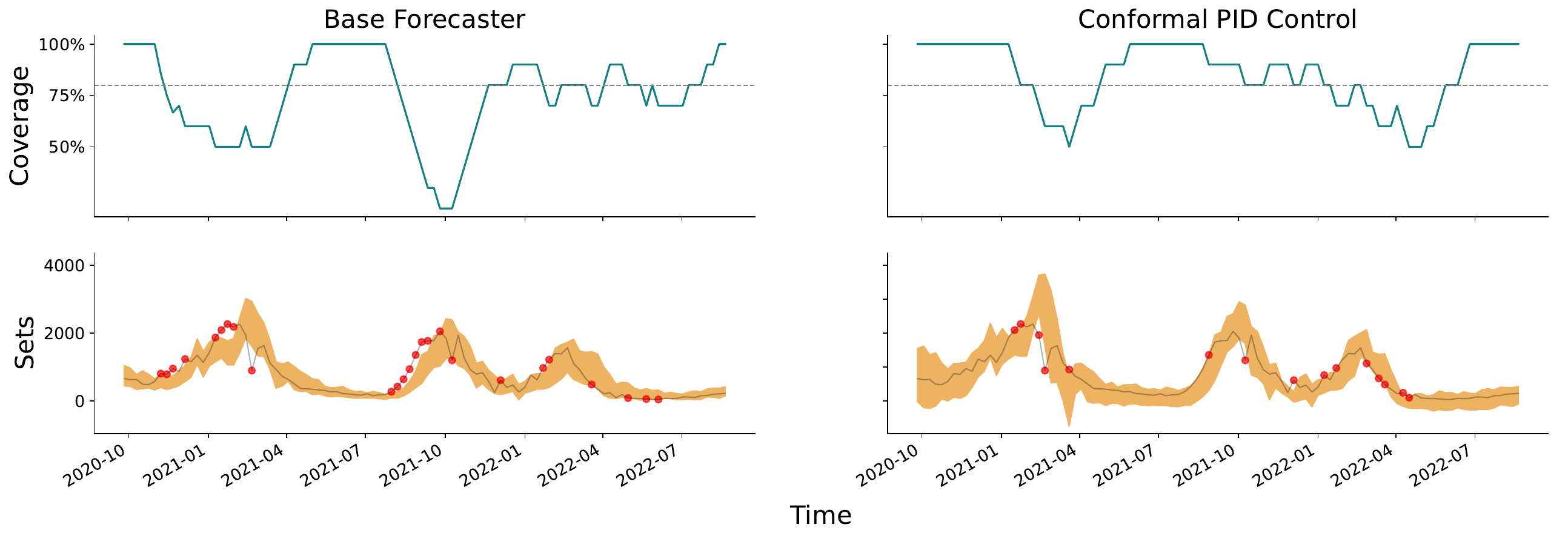}

\smallskip\smallskip
{\footnotesize \begin{tabular}{lll}
\toprule
& Base Forecaster & Conformal PID Ctrl \\
\midrule
Marginal coverage & 0.75 & 0.85 \\
Longest err sequence & 7 & 2 \\
Average set size & 556 & 827 \\
Median set size & 482 & 706 \\
75\% quantile set size & 759 & 1.03e+03 \\
90\% quantile set size & 993 & 1.36e+03 \\
95\% quantile set size & 1.11e+03 & 1.57e+03 \\
\bottomrule
\end{tabular}
}
\caption{Results for 4-week ahead COVID-19 death forecasting in Texas.}
\label{fig:tx}
\end{figure}

\section{Further experiments}
\label{app:further-experiments}

We give a more comprehensive view of our results, examing all data
sets, and a range of tuning parameters for each method. We restrict our
attention to AR as the base forecaster; for the rest of the base forecasters, we
refer to the GitHub repository:
\url{https://github.com/aangelopoulos/conformal-time-series}.  

For each experiment, we describe the data set in a new subsection, and two plots  
are included: one for the coverage, and one for the prediction sets. Each column
in the plots represents a different method, and each row is a different learning
rate. For the quantile tracker, the learning rate is to be interpreted 
as the multiplier in front of \smash{$\hat{B}_t$}. Each method is given a
different color, which stays consistent throughout the plots. We use a tan
integrator and a Theta scorecaster throughout, just as in the main text 
experiments. 

\subsection{Amazon/Google}

These data sets are part of a multivariate time series consisting of thirty
blue-chip stock prices, including those of Amazon (AMZN) and Google (GOOGL),
from January 1, 2006 to December 31, 2014. We attempt to forecast the daily 
opening price of each of Amazon and Google stock, on a log scale. Available to
the scorecaster are the previous open prices of \emph{all 30 stocks}. 

\subsection{Microsoft}

This data set is a univariate time series consisting of a single stock open
price, that of Microsoft (MSFT), from April 1, 2015 to May 31, 2021. 

\subsection{Daily temperature in Delhi}

This data set contains the daily temperature (averaged over 8 measurements in 3
hour periods), humidity, wind speed, and atmospheric temperature in the city of
Delhi from January 1, 2003 to April 24, 2017, scraped using the Weather
Underground API. 

\subsection{Electricity demand forecasting}

This data set measures electricity demand in New South Wales collected at
half-hour increments from May 7th, 1996 to December 5th, 1998 (we zoom in on the
first 2000 time points). There are also several other variables collected, such
as the demand and price in Victoria, the amount of energy transfer between New
South Wales and Victoria, and so on. These are given as covariates to the
scorecaster. The demand value is normalized by default to lie in $[0,1]$. 

\subsection{Synthetic data sets}

We perform some experiments on two synthetic score sequences which include
change points and other behaviors difficult to produce using real data. In this
setting, there is no ground truth $y_t$ sequence, so we do not plot the sets. 
Instead, we plot the scores themselves in one column, and the quantiles $q_t$
produced by each algorithm in a different column (when $q_t \geq s_t$, we
cover). The general goal is for $q_t$ to track the $1-\alpha$ quantile of $s_t$,
and if it is too far off, that corresponds to the ``set being too large or too
small'' in a situation where we would be constructing sets out of these scores.  

We consider an i.i.d.\ sequence of scores, a noisy increasing sequence of
scores, and a mix of change points and trends. Our codebase describes the score
generation procedure in more detail. 

\begin{figure}[p]
\includegraphics[width=\textwidth]{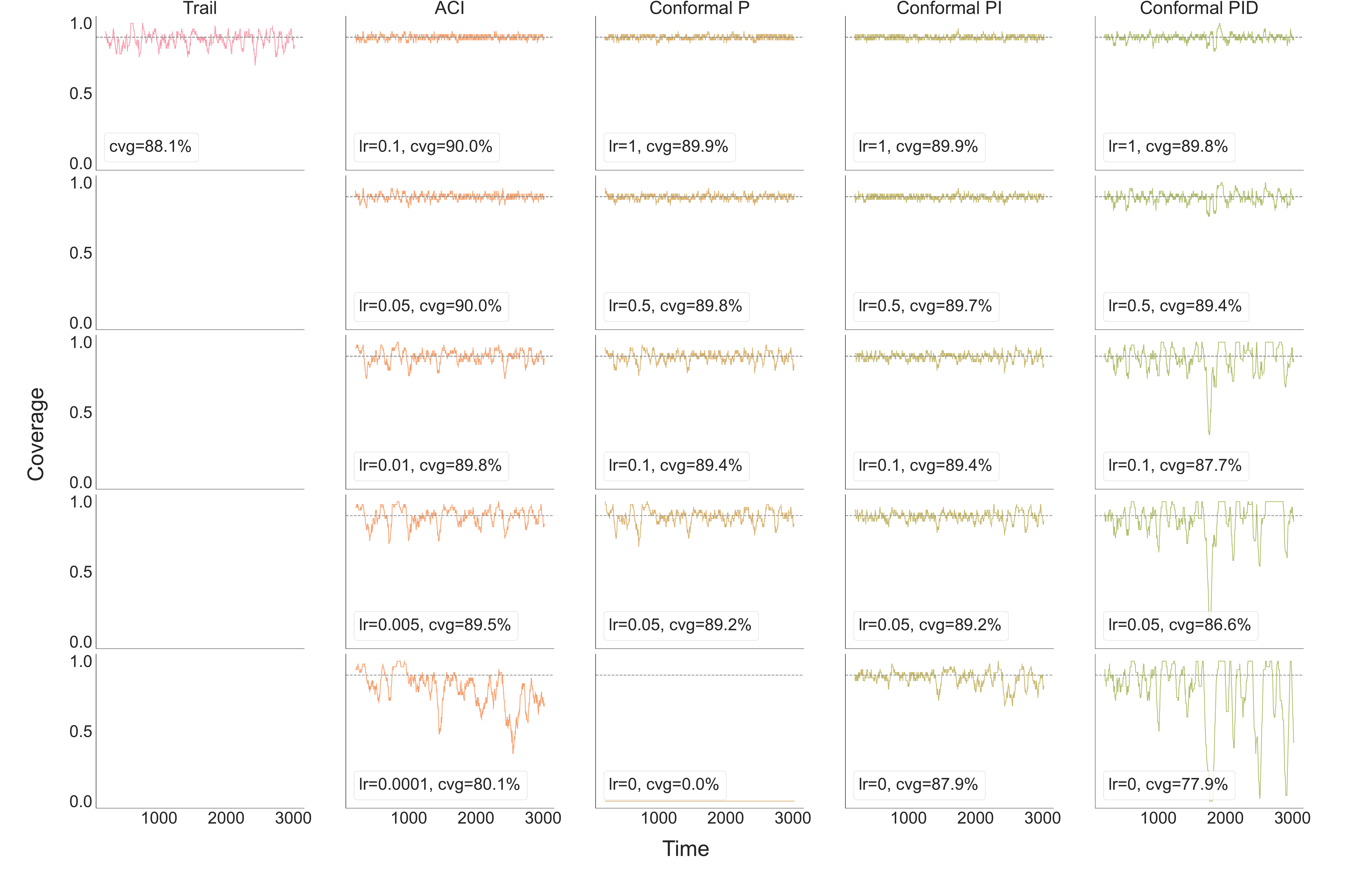} 

\bigskip\bigskip
\includegraphics[width=\textwidth]{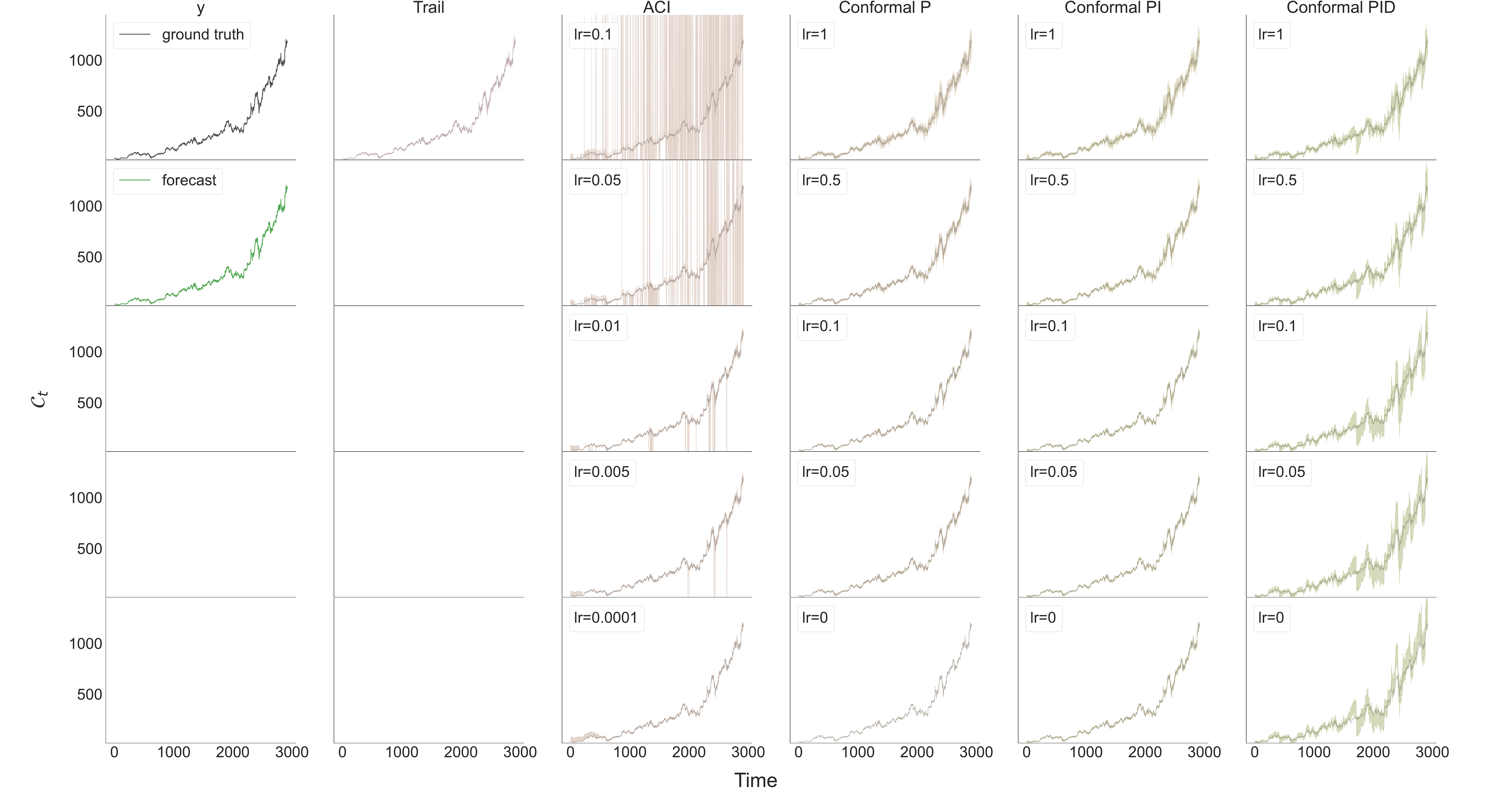}
\caption{Results for the Amazon data set.}
\label{fig:AMZN-appendix}
\end{figure}

\begin{figure}[p]
\includegraphics[width=\textwidth]{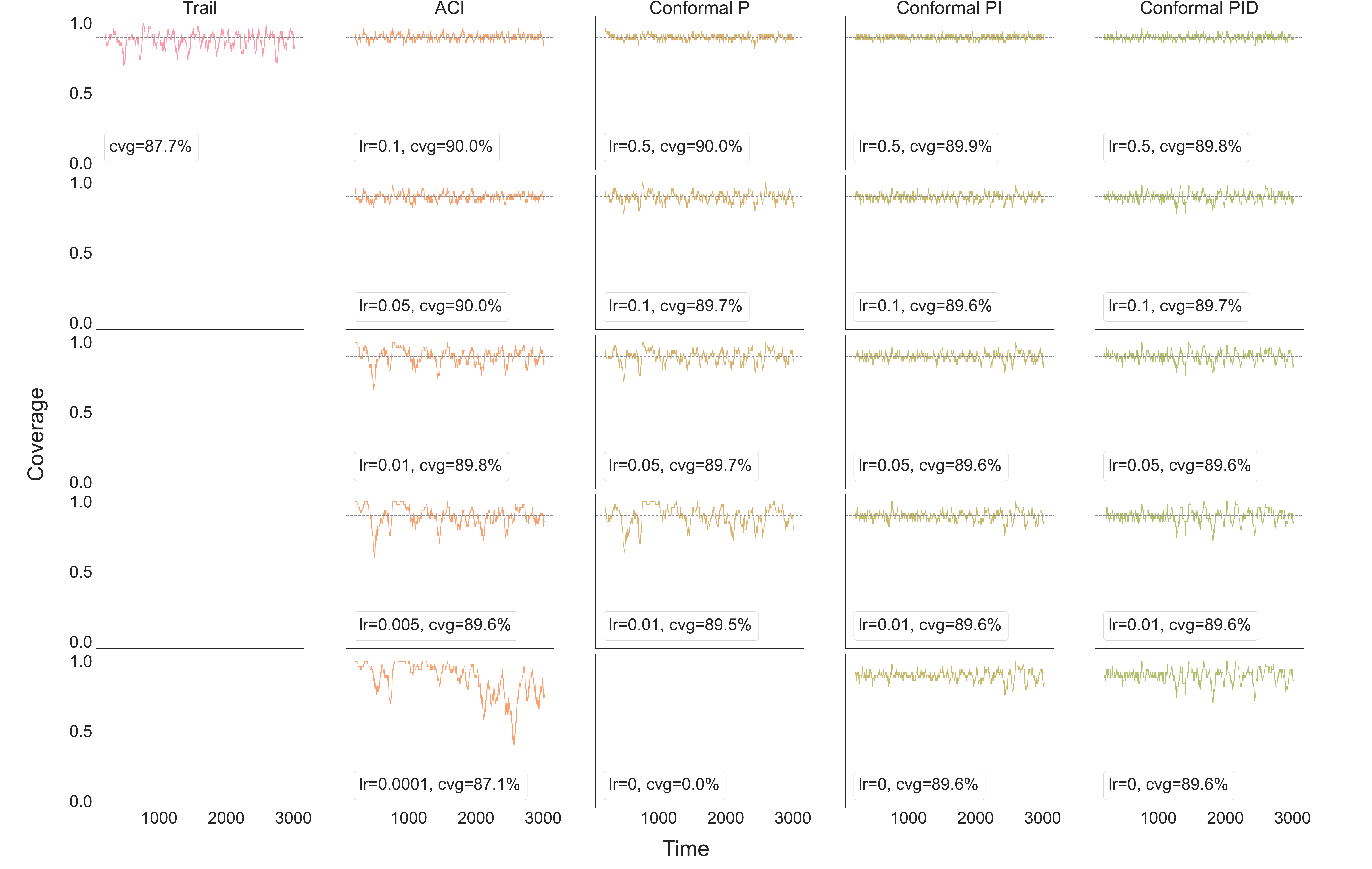} 

\bigskip\bigskip
\includegraphics[width=\textwidth]{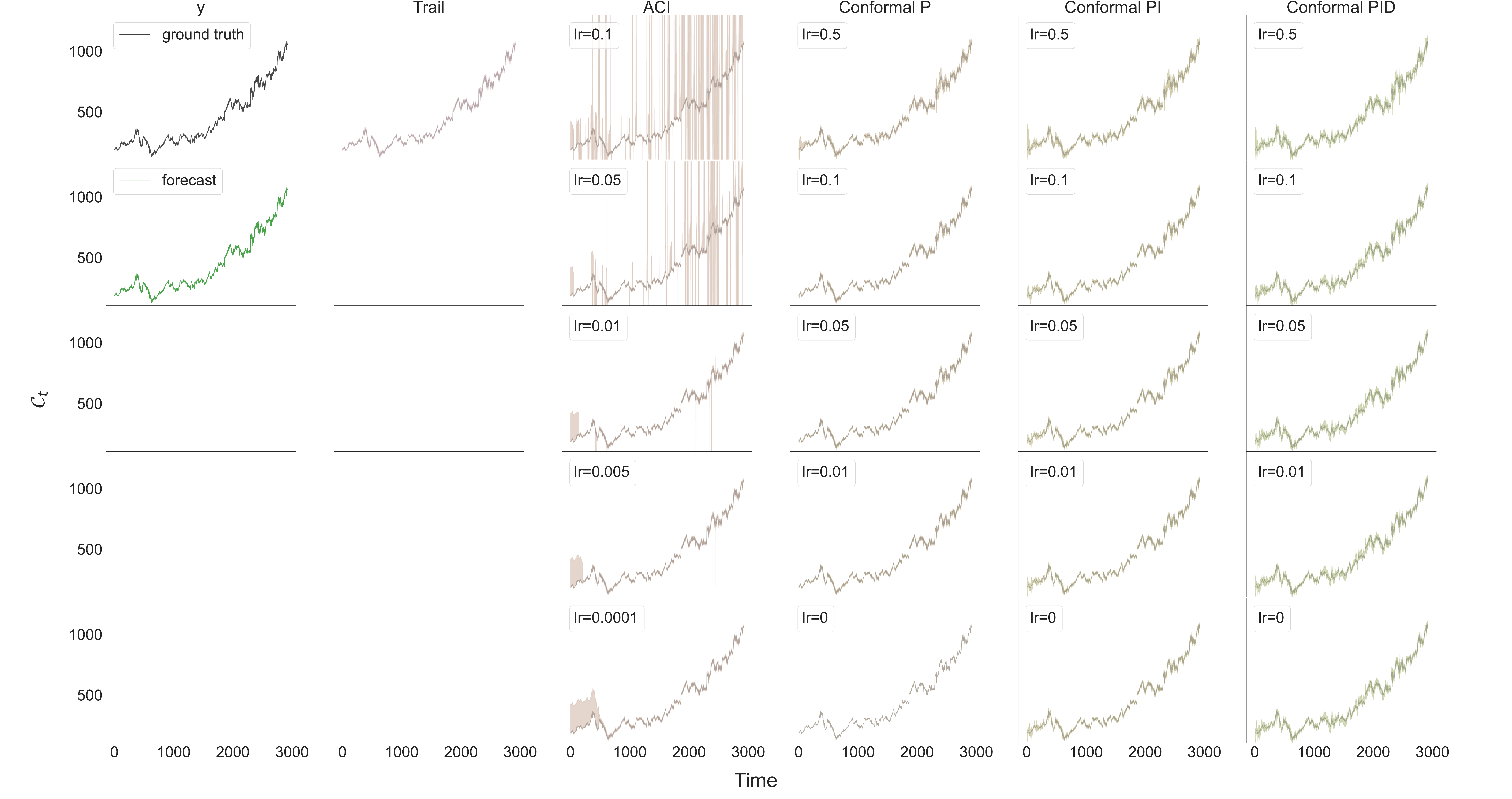}
\caption{Results for the Google data set.}
\label{fig:GOOGL-appendix}
\end{figure}

\begin{figure}[p]
\includegraphics[width=\textwidth]{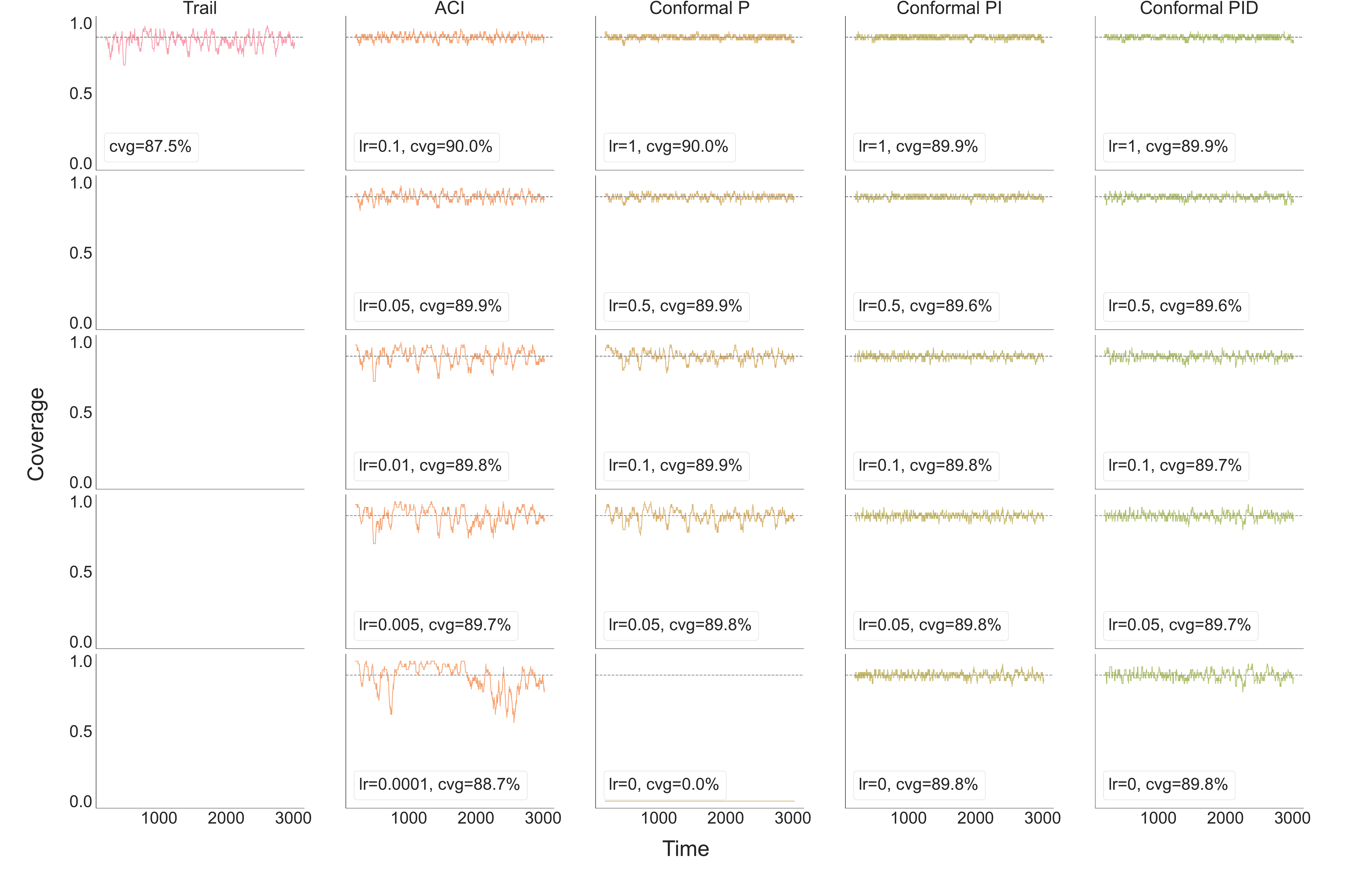} 

\bigskip\bigskip
\includegraphics[width=\textwidth]{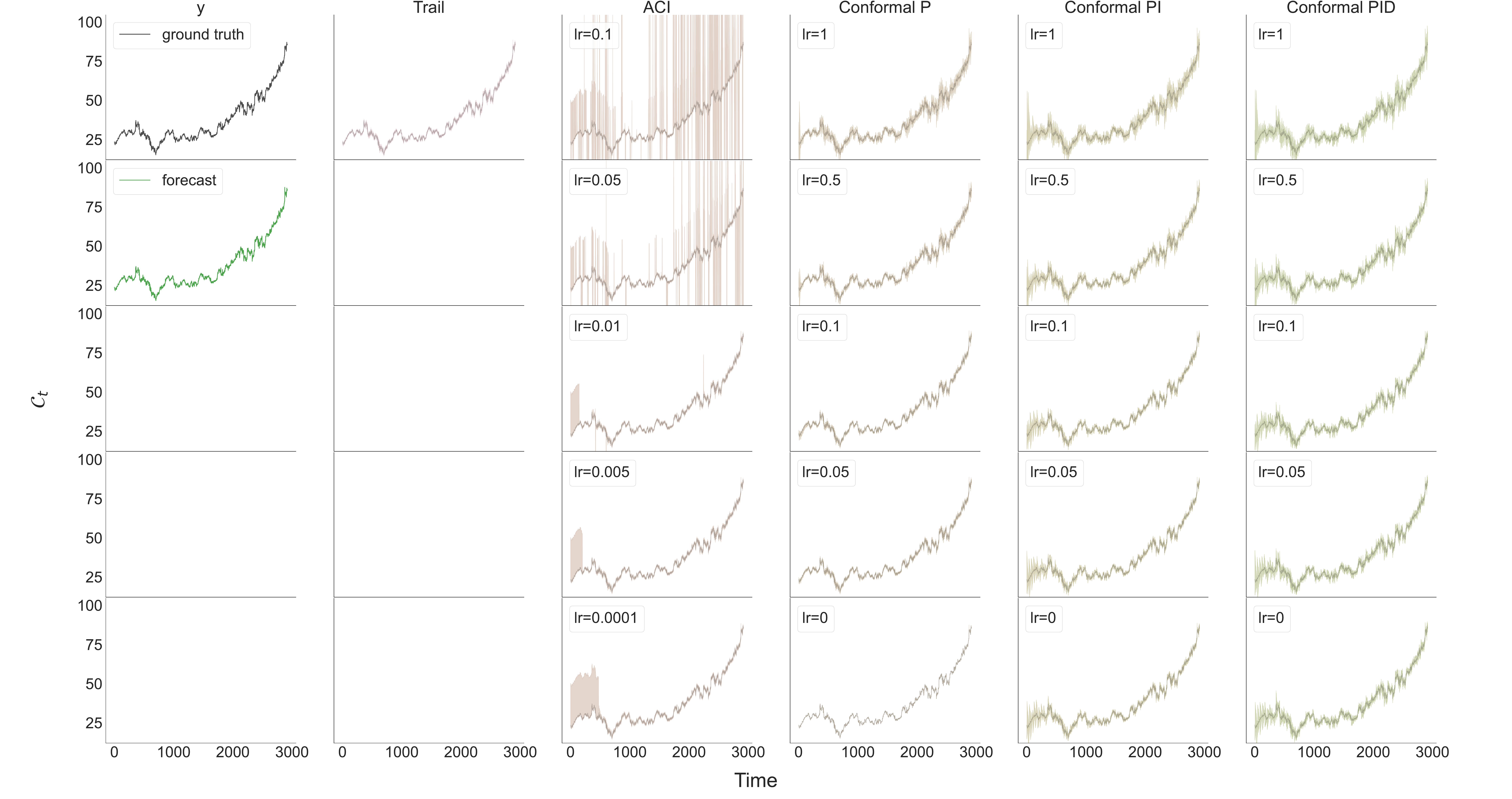}
\caption{Results for the Microsoft data set.}
\label{fig:MFST-appendix}
\end{figure}

\begin{figure}[p]
\includegraphics[width=\textwidth]{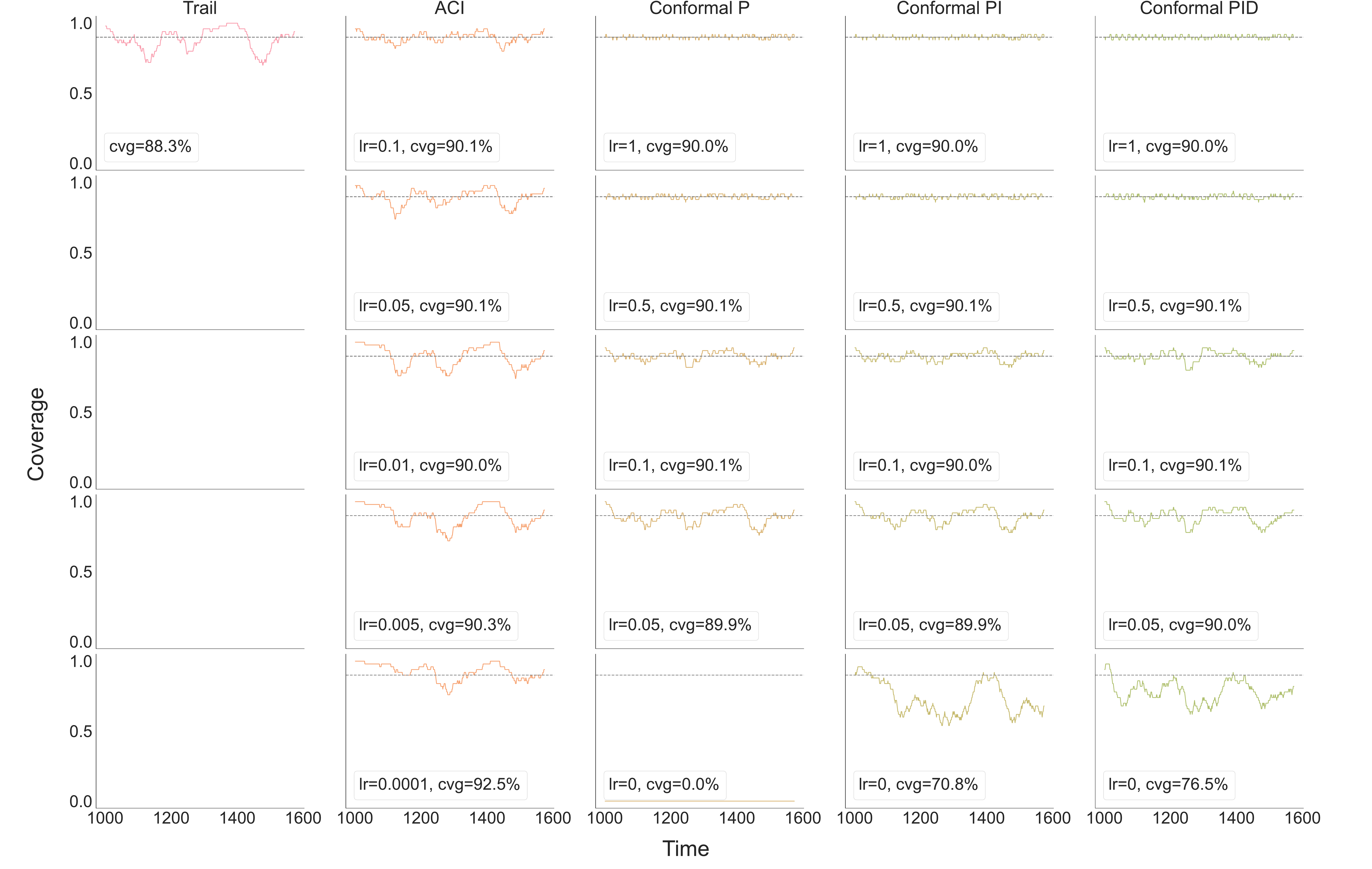} 

\bigskip\bigskip
\includegraphics[width=\textwidth]{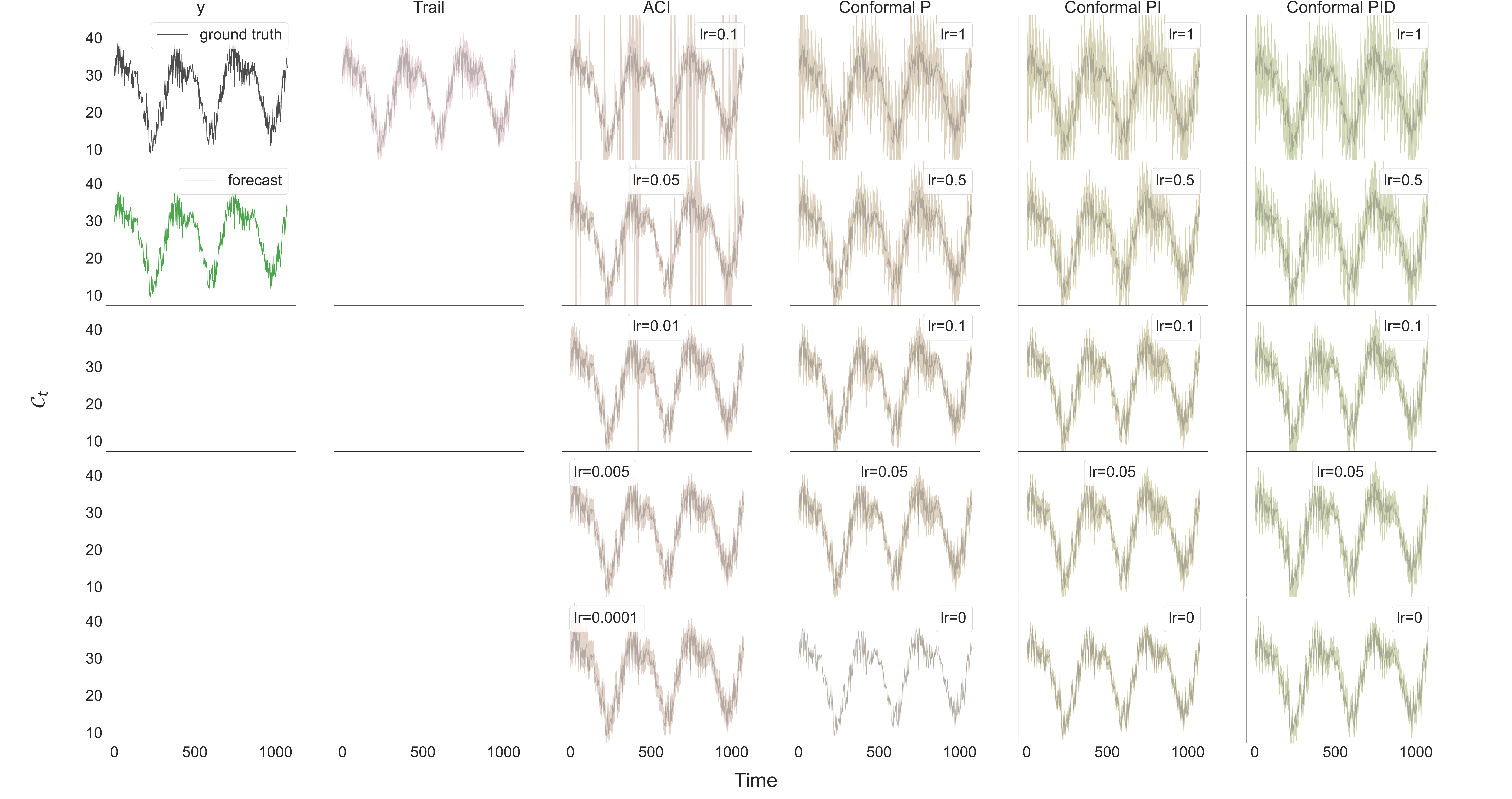}
\caption{Results for the Delhi temperature data set.}
\label{fig:daily-climate-appendix}
\end{figure}

\begin{figure}[p]
\includegraphics[width=\textwidth]{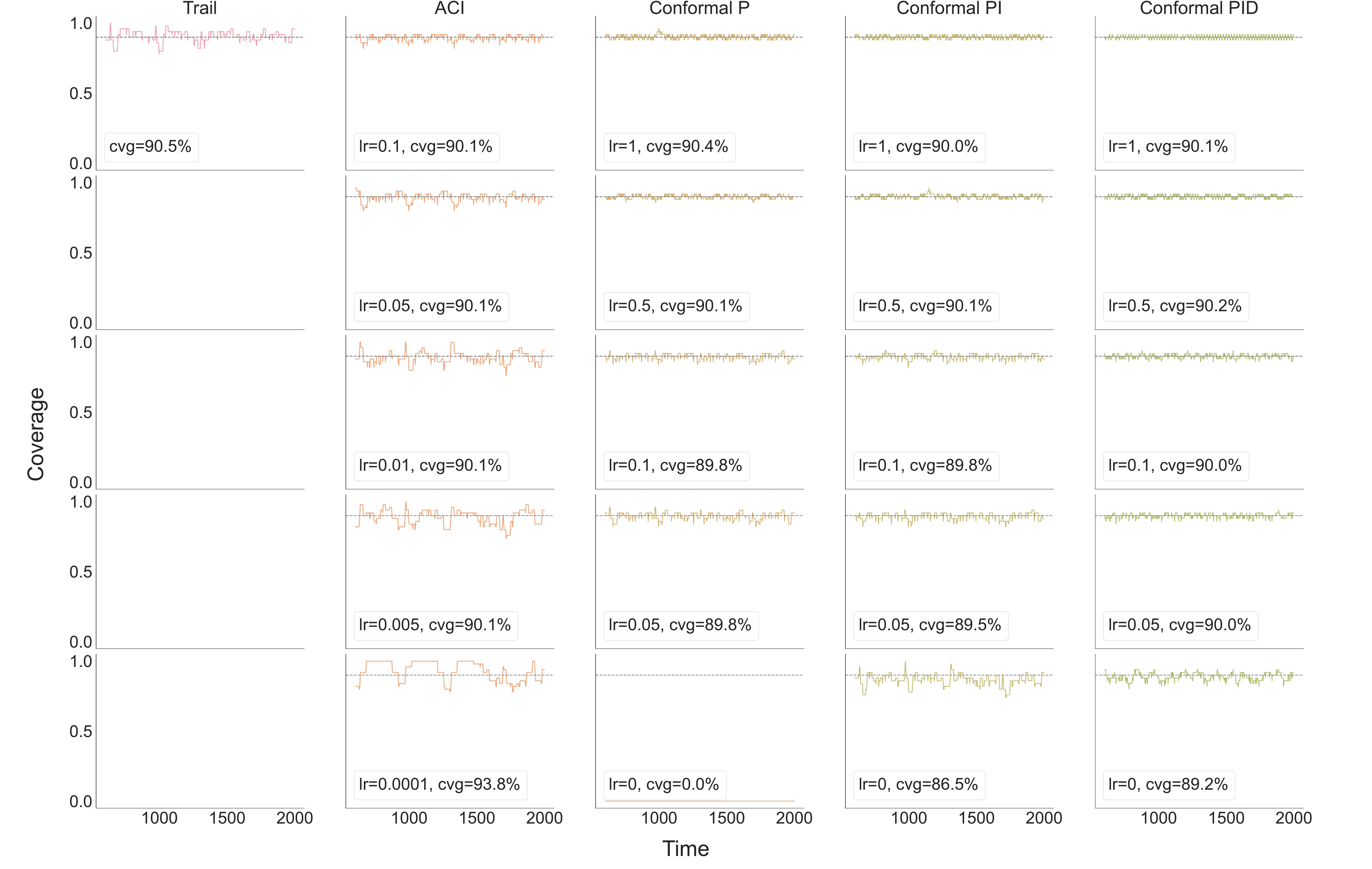} 

\bigskip\bigskip
\includegraphics[width=\textwidth]{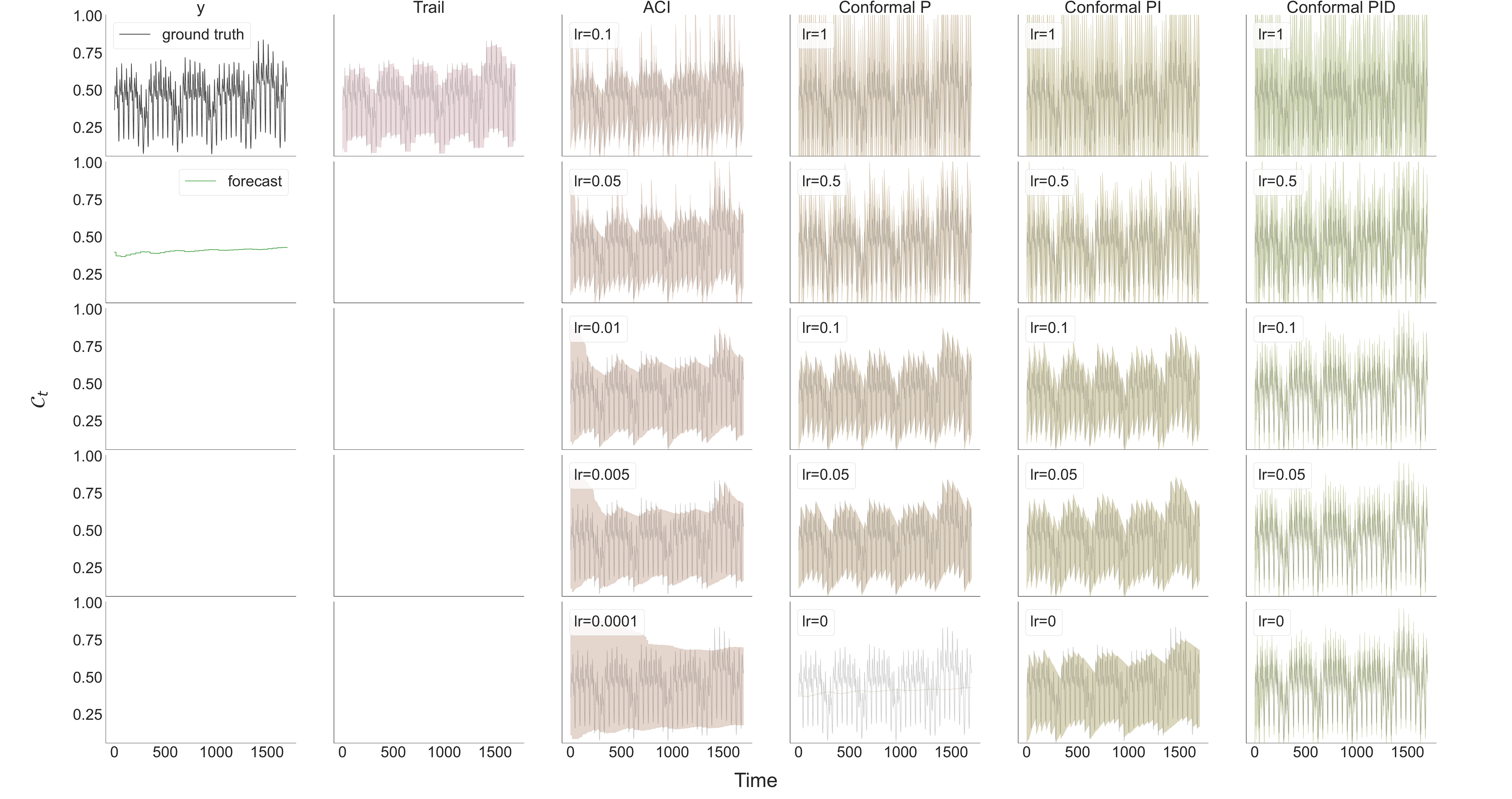}
\caption{Results for the electricity demand forecasting data set.}
\label{fig:elec2-appendix}
\end{figure}

\begin{figure}[p]
\includegraphics[width=\textwidth]{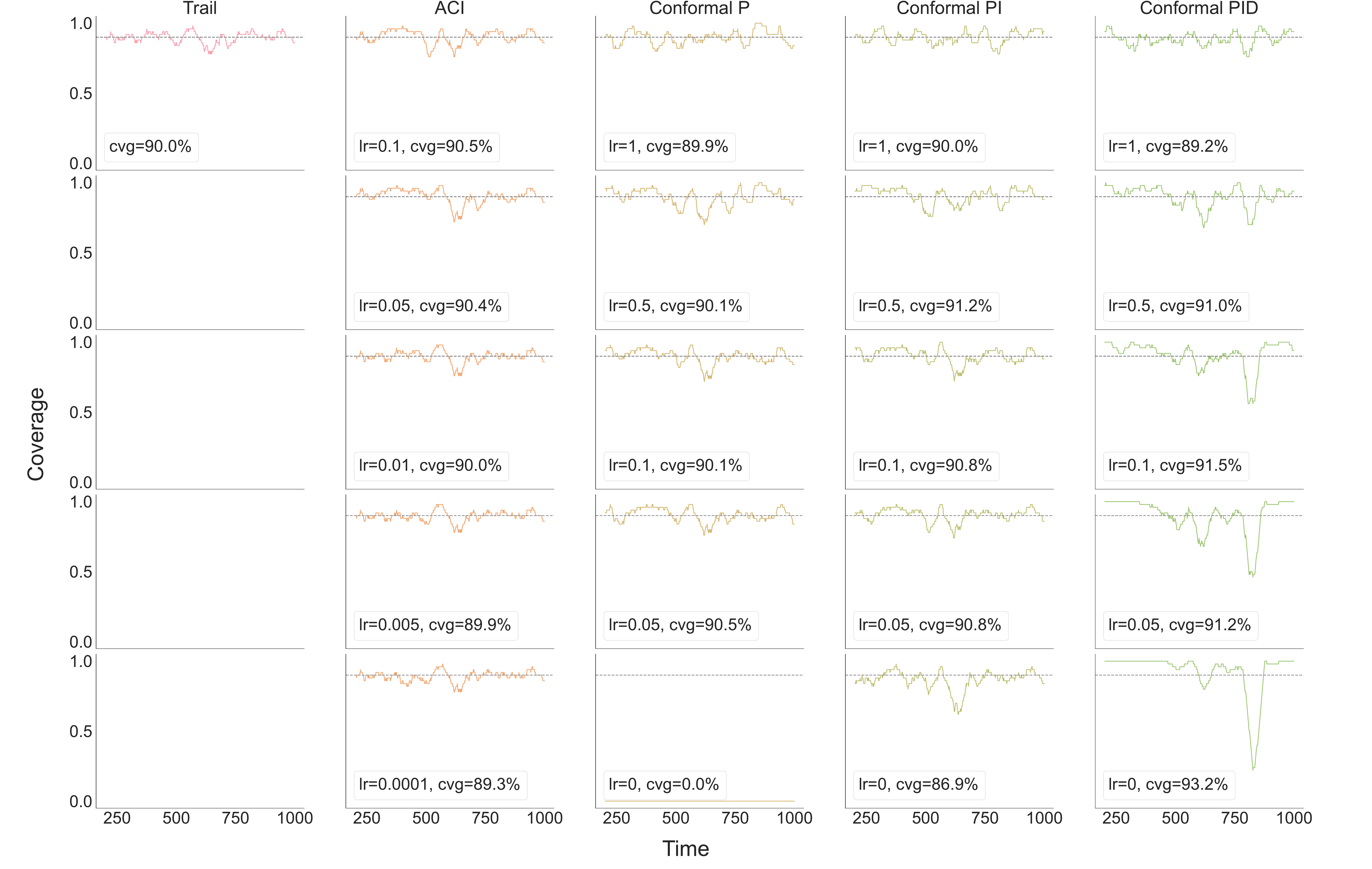} 

\bigskip\bigskip
\includegraphics[width=\textwidth]{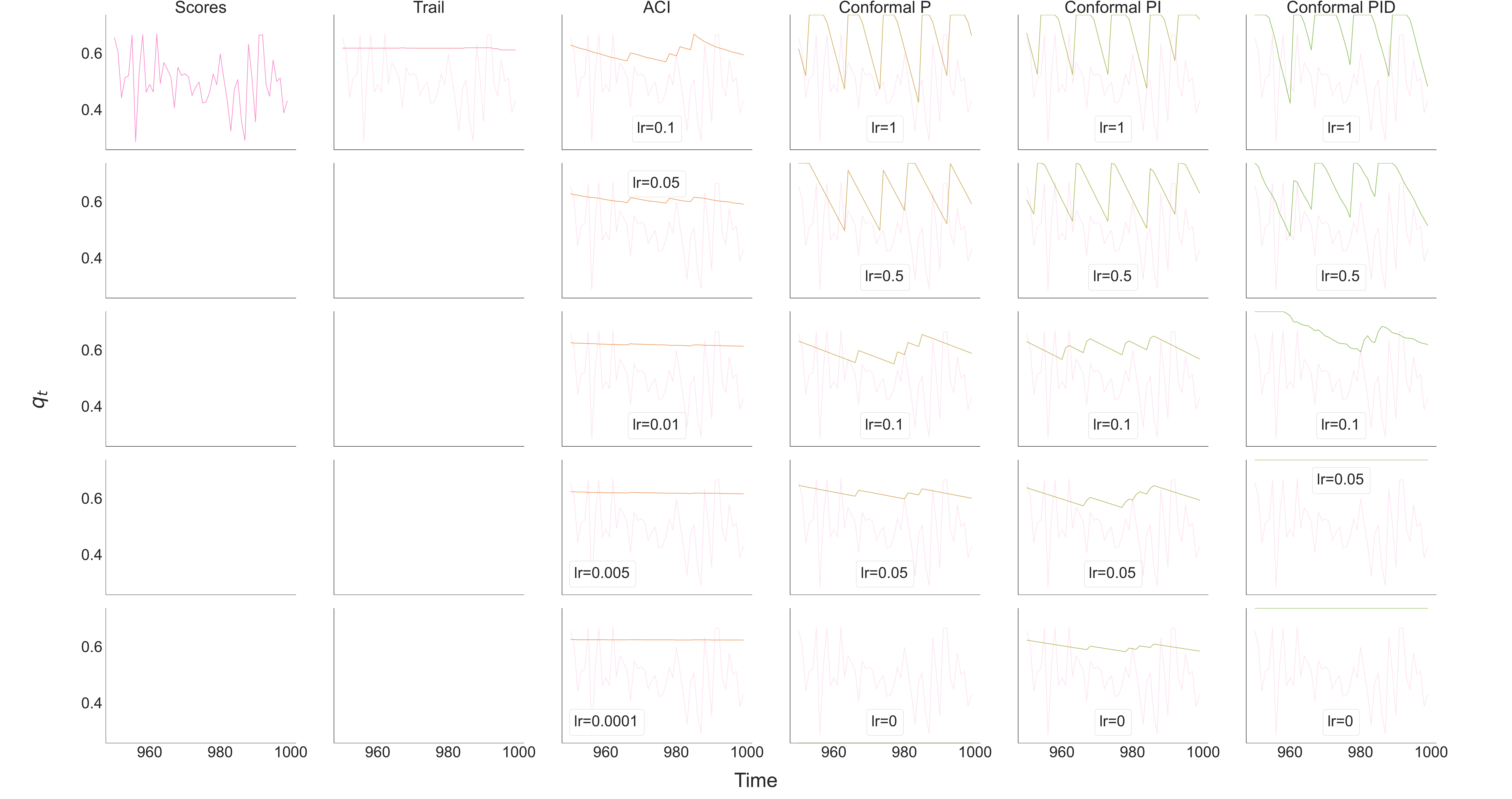}
\caption{Results for an i.i.d.\ score sequence.}
\label{fig:stationary-appendix}
\end{figure}

\begin{figure}[p]
\includegraphics[width=\textwidth]{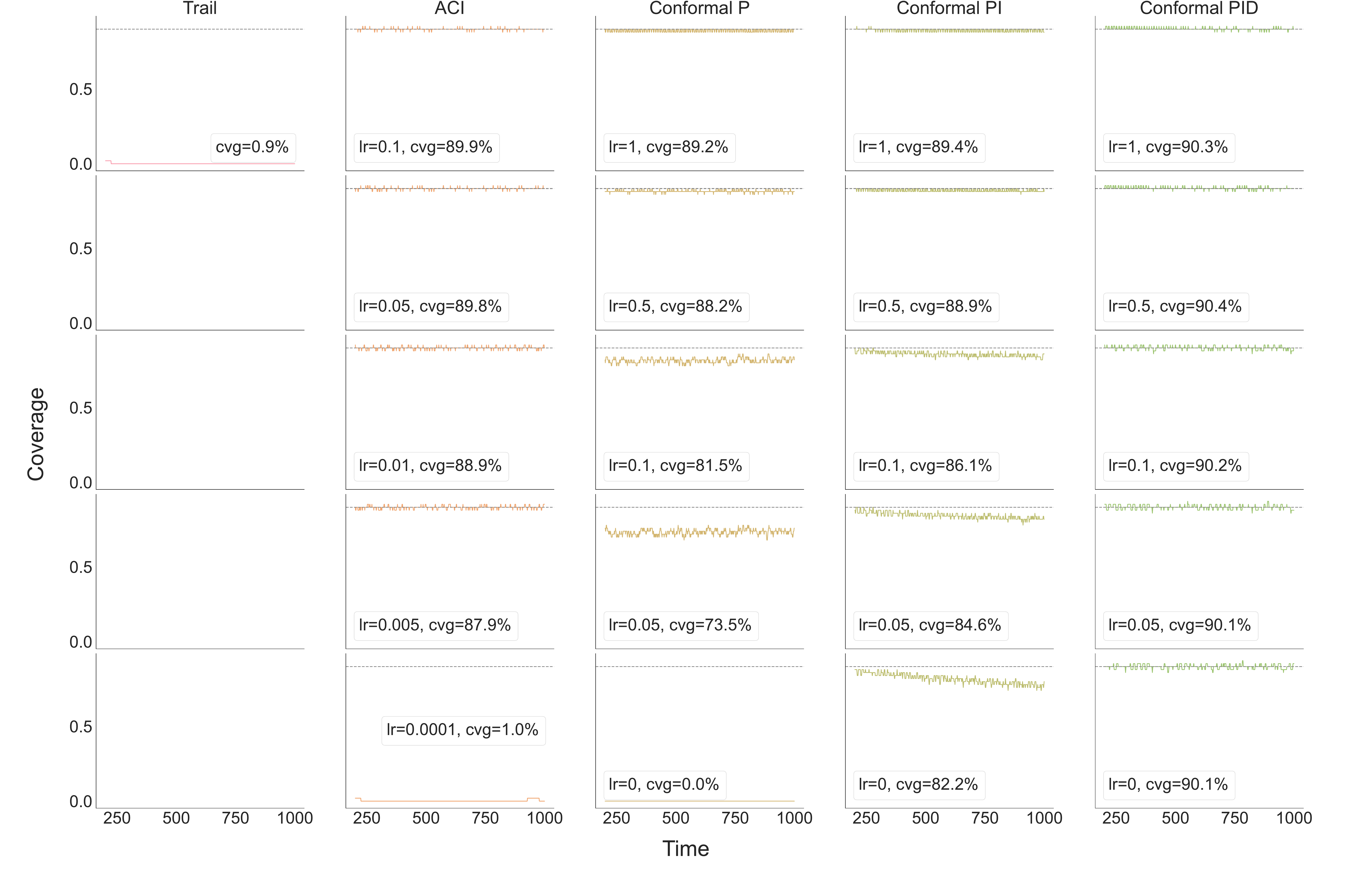} 

\bigskip\bigskip
\includegraphics[width=\textwidth]{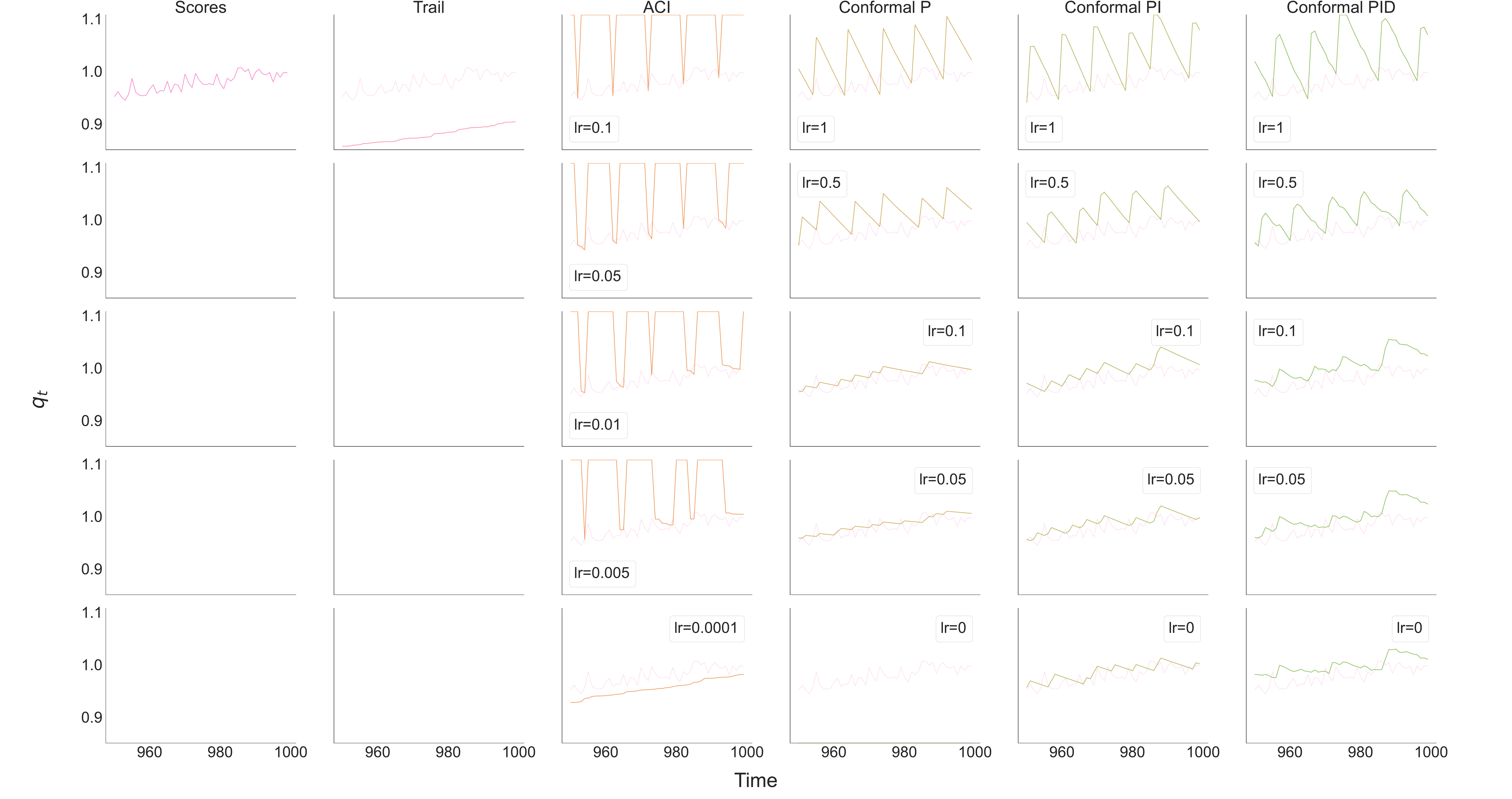}
\caption{Results for an increasing score sequence.}
\label{fig:increasing-appendix}
\end{figure}

\begin{figure}[p]
\includegraphics[width=\textwidth]{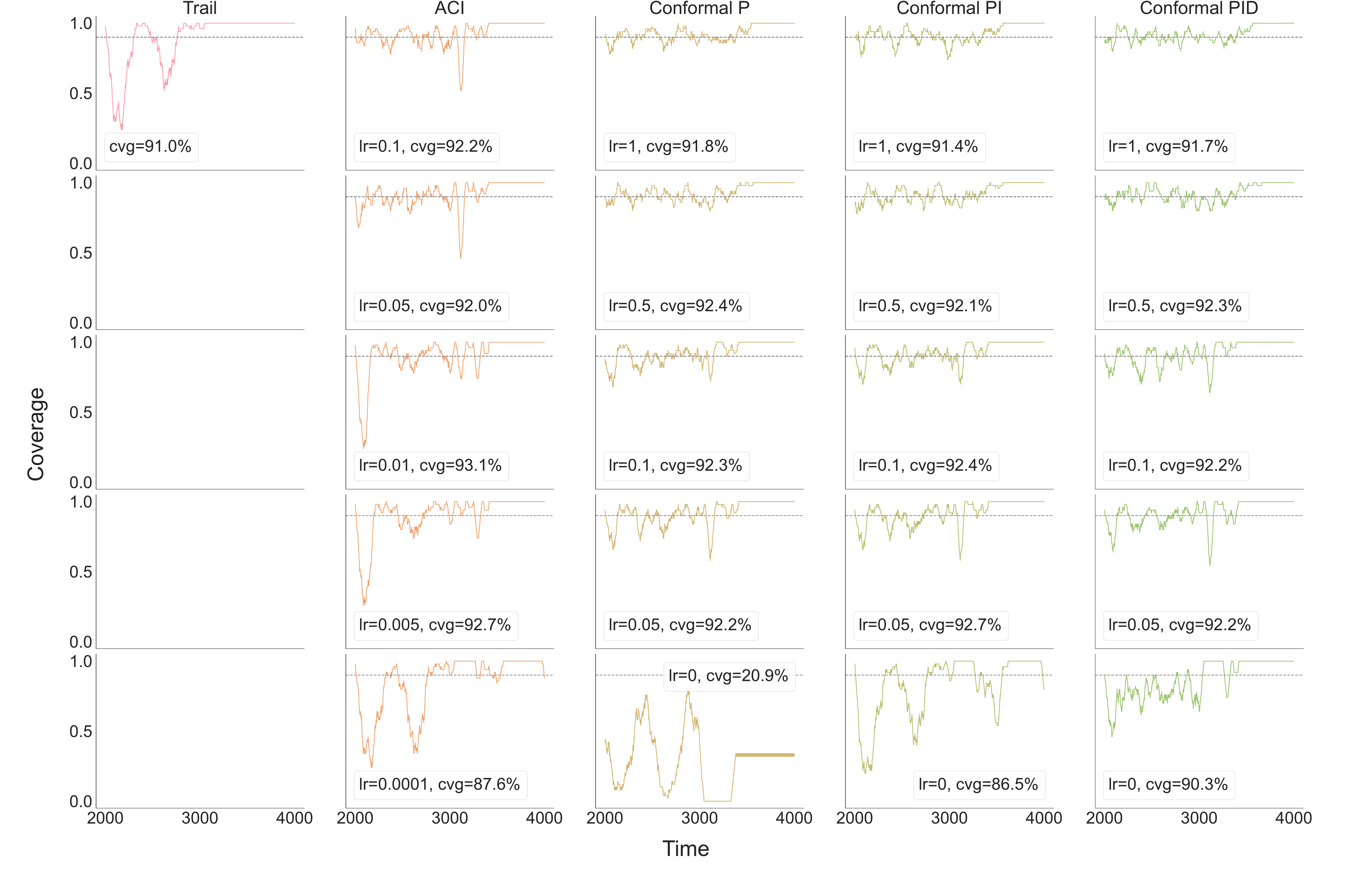} 

\bigskip\bigskip
\includegraphics[width=\textwidth]{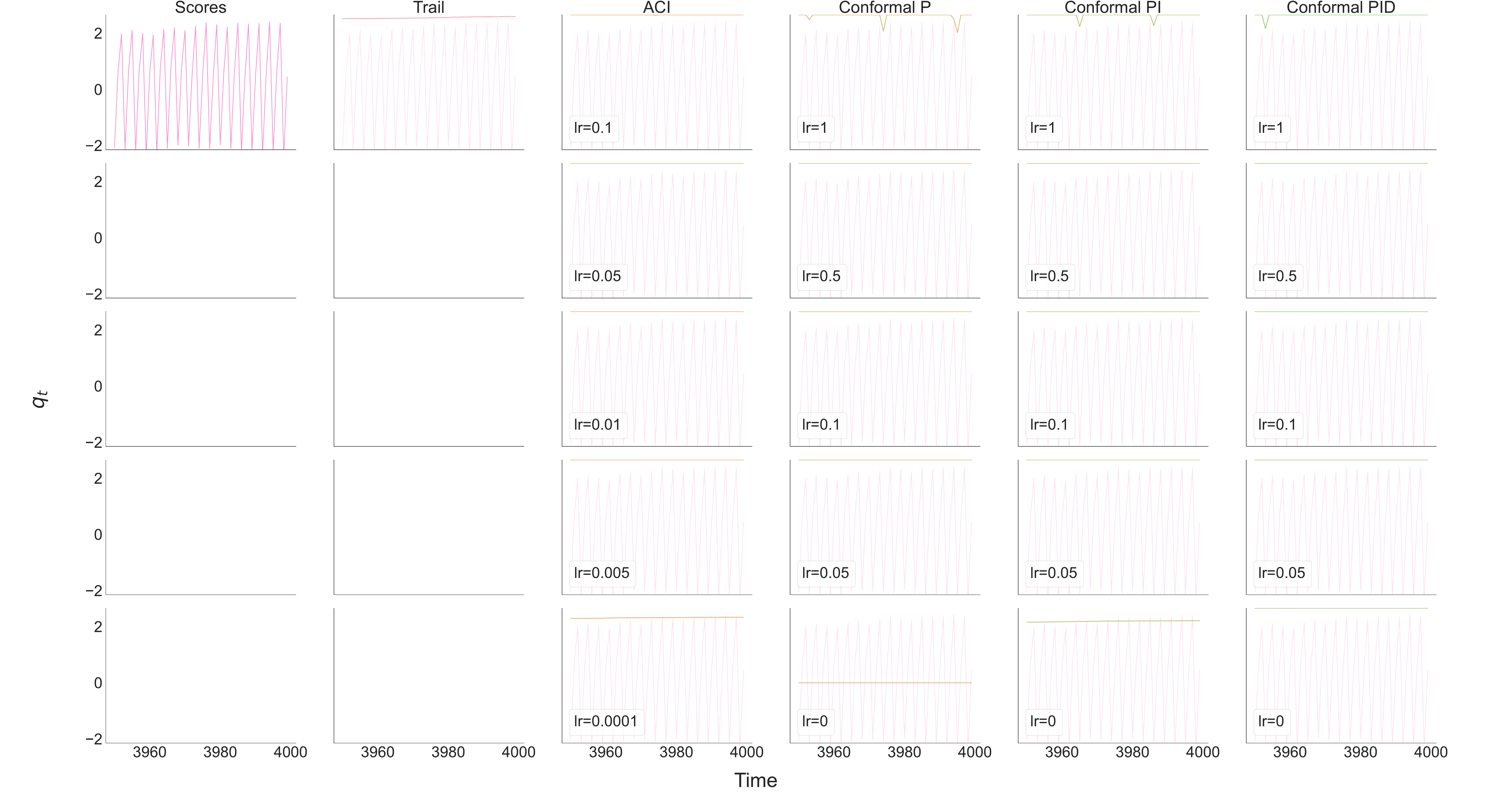}
\caption{Results for a score sequence that is a mix of change points and trends.}
\label{fig:mix-appendix}
\end{figure}

\end{document}